\documentclass[manuscript,nonacm]{acmart}

\usepackage{latexsym}
\usepackage{amsmath}
\usepackage{amsthm}
\usepackage{booktabs}
\usepackage{enumitem}
\usepackage{graphicx}
\usepackage{color}
\usepackage{mathpartir}
\usepackage{bussproofs}
\usepackage{tabularx}
\usepackage{makecell}
\usepackage{multirow}
\usepackage{tikz}
\usetikzlibrary{arrows.meta,positioning,calc,fit,decorations.pathreplacing}

\usepackage{tikz}
\usetikzlibrary{arrows.meta,positioning,calc,decorations.pathreplacing}

\usepackage{tikz}
\usetikzlibrary{arrows.meta,positioning,calc,decorations.pathreplacing}

\tikzset{
  flow/.style={
    -{Latex[length=2.6mm,width=1.8mm]},
    line width=.8pt,
    draw=black!55
  },
  fallback/.style={
    flow,
    dashed
  },
  mainnode/.style={
    align=center,
    font=\small
  },
  actor/.style={
    font=\small\itshape
  },
  op/.style={
    font=\scriptsize\itshape,
    fill=white,
    inner sep=1pt
  },
  chan/.style={
    font=\scriptsize,
    fill=white,
    inner sep=1pt
  },
  grouplabel/.style={
    font=\bfseries\small,
    text=green!45!black
  }
}

\newtheorem{definition}{Definition}%

\newcounter{remark}[definition]
\renewcommand{\theremark}{\thedefinition.\arabic{remark}}

\theoremstyle{remark}
\newtheorem*{remarkinner}{Remark}

\newenvironment{remark}{%
    \refstepcounter{remark}%
    \begin{remarkinner}[\theremark]%
}{%
    \end{remarkinner}%
}
\newcommand{\alt}{\;;\;}
\AtBeginDocument{%
  }

\title{Transparency as Answerability: Role-Dependent Information Flow Modeling}

\author{Mattias Brännström}
\email{mattias.brannstrom@umu.se}
\orcid{0000-0003-3113-2631}
\affiliation{%
  \institution{Umeå University}
  \city{Umeå}
  \country{Sweden}}

\author{Themis Dimitra Xanthopoulou}
\email{themis.xanthopoulou@umu.se}
\orcid{0000-0003-2914-0472}
\affiliation{%
  \institution{Umeå University}
  \city{Umeå}
  \country{Sweden}}

\author{Lili Jiang}
\email{lili.jiang@umu.se}
\orcid{0000-0002-7788-3986}
\affiliation{%
  \institution{Umeå University}
  \city{Umeå}
  \country{Sweden}}

\renewcommand{\shortauthors}{Br\"annstr\"om et al.}

\begin{abstract}
Transparency in AI-supported decision systems is widely required but poorly defined. Existing approaches treat transparency as a property of systems or components, producing characteristic failures: assessments that do not transfer across organizational boundaries, stakeholder questions that cannot be formulated within the available frame of 
analysis, and component-level findings that say nothing about system-level behavior. We propose treating transparency as a relation --- a system is transparent to a stakeholder to the degree that their legitimate questions are answerable from its structure. We formalize this using information flow networks, in which transparency questions are expressed in terms of network structure and answerability is the condition that the system settles them. The framework supports three dimensions of analysis: structural answerability from the network alone, compositional answerability across organizational boundaries for deployed systems, and role-dependent answerability given a stakeholder's accessible sub-network. We demonstrate the framework on a real pre-integration assessment of an AI recruitment tool in a public sector municipality. The assessment uses participatory design and interviews to model the decision system structure. We show how structural and role-dependent transparency questions can be formally stated and analyzed across the Provider and Deployer boundary where the AI system is composed together with the deployer's decision process.
\end{abstract}

\ccsdesc[500]{Decision support systems}
\ccsdesc[300]{Codes of ethics}
\ccsdesc[100]{Socio-technical systems}

\keywords{transparency, answerability, information flow modeling, 
role-dependent transparency, AI accountability, socio-technical systems, 
EU AI Act, formal methods}

\renewcommand\footnotetextcopyrightpermission[1]{} %
\begin{document}
\maketitle

\section{Introduction} \label{sec:introduction}
Transparency is a central requirement of AI governance, yet it is rarely given 
a precise structural account — what must be visible, to whom, and under what 
conditions remains underspecified even in the most detailed regulatory 
frameworks. The European Union's Artificial Intelligence Act is a case in 
point~\cite{EU2024AIAct}. It mandates transparency, 
accountability, and human oversight, and defines distinct roles — Provider and 
Deployer — each with formal obligations throughout the AI system 
lifecycle~\cite{EU2024AIAct}. Yet the Act specifies what transparency requires without a structural account of what satisfying those requirements means in practice — that is, without specifying how the AI system, its deployment context, and the roles that interact with it must be constituted for transparency obligations to be met.

This gap has consequences. Existing assessment frameworks treat AI systems as 
monolithic artifacts — evaluated through model-level accuracy, bias, or 
fairness metrics — without a structured account of how decisions relate to the 
broader socio-technical processes in which they operate~\cite{green_algorithmic_2020, 
selbst2019fairness}. This risks what Selbst et al. call the \textit{framing 
trap}: inferring system-level compliance from component-level 
properties, treating a model's accuracy or fairness score as if it characterizes the decision process it participates in.~\cite{selbst2019fairness}. The Provider/Deployer separation 
compounds this by imposing structural black boxes that obscure the interrelation 
among system, use-context, and affected stakeholders~\cite{VealeBorgesius2021}.

We argue that what is missing is a formalization of transparency as a \textit{relational} property --- not something a system simply has, but something it provides with respect to a specific set of legitimate questions posed by a specific stakeholder role. 

When transparency gets treated as an objective property of an AI system in isolation it leads to tokenistic disclosures not tied to any particular purpose. For transparency to be instrumental, it needs to be able to answer questions stakeholders actually need answering in a way that can be comprehended. The comprehension aspect has been argued well by others \cite{paez2019pragmatic,miller2017inmates,lipton2018mythos} --- this perspective is important but, by itself insufficient. 

For transparency to be instrumental, the information it provides must therefore be:
\begin{enumerate}
    \item \textbf{Structurally grounded}, because not all systems can answer all questions. If information is not present or not used in the required way, then it does not matter how pedagogical the answer is.
    \item \textbf{Meet particular information needs}, because stakeholders have obligations, rights or ethical concerns upon them, and they need this information in order to meet these requirements. 
    \item \textbf{Socio-technical in scope}, because the relevant questions asked are socio-technical. 'Why was the candidate rejected?' is not a question about an algorithm, it is a question about a process that involves candidates, acceptance criteria, competition and decisions which affect someone's life. 
\end{enumerate}

By \textit{role} we therefore mean stakeholders with particular responsibilities and accountability relations to the system --- candidates, recruiters, regulatory auditors. The \emph{information demands} of the stakeholders of this role, what they need to know to meet their responsibilities, we term the \textit{legitimate questions} of this role.

Building on the Information Flow Methodology (IFM)~\cite{brannstrom2023impact,ifm2025guide,brannstrom2025clear}, which models decision systems as networks of information sites and transformation channels, this paper develops a theoretical and formal account of answerability over such networks.

With this in place we can understand transparency as the relationship between what information a role needs and what answers a system can provide.

We ground the formalism in a real-world pre-integration assessment of a high-risk AI recruitment tool, applied in collaboration with an external auditor at a public-sector municipality. 

The paper addresses three research questions:
\begin{enumerate}
    \item Can transparency be formally characterised as a relation between a 
    system and a set of questions rather than a property of a system alone?

    \item When AI systems are composed within an organizational process, what compositional structure is required for transparency questions that span organisational and technical boundaries to be answerable?

    \item How does role-dependent access determine which transparency questions are answerable, and what are the structural conditions under which a system is transparent to a given role?
\end{enumerate}

The remainder of this paper is structured as follows. Section~2 introduces the regulatory background and motivates the need for a structural account of 
transparency. Section~3 develops the formal framework. Section~4 presents the use case. Sections~5--7 apply the framework to structural, compositional, and 
role-dependent answerability respectively. Section~8 discusses implications and limitations, and Section~9 concludes.

\section{Background} \label{sec:background}

The structural account of transparency developed in this paper responds to a specific regulatory and scholarly context. A central innovation of the EU AI Act~\cite{EU2024AIAct} is its distribution of obligations for high-risk systems across two operational \textit{actors}: the \textit{Provider} (developer or vendor) and the \textit{Deployer} (user organization). This separation is a substantive structural choice --- transparency, oversight, and accountability obligations attach differently depending on where an actor sits relative to the system --- and it reflects real asymmetries in access, expertise, and control across the AI lifecycle.

The regulation and the practical landscape of multi-actor, multi-component contexts together create a concrete analytical challenge for AI transparency. Once obligations are divided across organizational actors, the questions stakeholders can ask and the information available to answer them likewise divide, and transparency must be analysed across the boundary rather than within either side of it.

This division is also rarely clean-cut in practice: systems are adapted after delivery, organisations act as both developer and user, and documentation is often produced under one actor's perspective and read under another's~\cite{VealeBorgesius2021, green_algorithmic_2020}. Tools for AI governance, like the European Commission's ALTAI, the Dutch FRAIA template,\footnote{https://www.government.nl/documents/reports/2021/07/31/impact-assessment-fundamental-rights-and-algorithms} and the Swedish DIGG trust model tend to adopt a single-actor (typically focused on the developer) or linear-pipeline perspective, and so address one side of the boundary at a time rather than the transparency relations that span it (we return to this in Section \ref{sec:discussion}).

Two bodies of work engage the different sides of this gap. The first situates accountability within socio-technical context. Metcalf et al. argue that accountability must be tied to actual sites of impact rather than abstracted through technical compliance~\cite{metcalf2021algorithmic}; Weinberg shows that fairness metrics cannot substitute for attention to concrete social consequences~\cite{weinberg2022rethinking}; Binns locates stakeholder impact in deployment context rather than model behaviour~\cite{binns2018fairness}; Green and Viljoen critique the assumption that algorithms can be evaluated apart from context~\cite{green_algorithmic_2020}; Selbst et al. name the underlying error the \emph{framing trap} --- inferring system-level properties from component-level evaluation~\cite{selbst2019fairness}; and Ananny and Crawford argue that transparency without specification of audience and question becomes another form of the same abstraction~\cite{ananny2018seeing}. These critiques establish that transparency cannot be inferred from properties of an AI component alone. It must be evaluated relative to the decision process as deployed: what information exists, how it is transformed and preserved, who can access it, and how these relations extend across technical and organizational boundaries. 

At the same time, XAI has developed a broad range of methods for extracting and communicating information about AI models and their behaviour, including feature-attribution, counterfactual, surrogate-model, and example-based explanations~\cite{arrieta2020explainable}. These approaches provide increasingly sophisticated means of producing answers about particular aspects of technical systems, but do not by themselves determine which questions a given system is structurally capable of answering. This limitation becomes especially important once the object of analysis extends beyond a model to a composed or socio-technical decision process, where relevant information may be distributed across multiple components, transformations, and actors. The resulting gap is therefore between information demand and system structure: we lack a general formal account connecting the questions asked of a system to the information that its structure can support.

We address this gap by treating transparency as answerability. A question specifies an information requirement; the system determines whether the required information exists, how it is transformed and preserved, and whether the question can be settled from its structure. Where questions are associated with stakeholder roles, this relation becomes role-dependent; where systems cross organizational and technical boundaries, it becomes compositional.

Transparency demands are naturally expressed as questions, explicitly asked or just wondered: a candidate asks why they were rejected, a recruiter wonders how scores were produced, an auditor asks if all candidates are given equal treatment. Questions have become an established unit of explanatory demand in XAI~\cite{miller2019explanation,liao2020questioning}, and we adopt the same starting point for transparency more broadly. This premise --- that explanations answer questions --- has a longer history in philosophy of science, developed with different emphases by van Fraassen's pragmatic theory of explanation~\cite{vanFraassen1980} among others; there, though, the concern is what makes an answer explanatorily \textit{good}, selecting among true candidate answers via a contrast class and an interest-relative relevance relation. Our concern is prior to that: not which answer is best, but whether one settling the question exists in the system's structure at all. The formal notion of settlement, and the semantic theory of questions itself, we take from inquisitive semantics~\cite{ciardelli2018inquisitive}, which characterizes questions in terms of the information states that settle them.

Making this precise requires formal tools on both sides: for the questions themselves, and for the systems they are asked of. Not all questions can be answered by any structure, and questions themselves, via their presuppositions, can contain structural claims. Different transparency questions may also concern different depths of system description, and thus require different kinds of structure and access to be answered.

Thus, what is needed is an account of the conditions under which a system makes the information behind a question accessible to a given role. On the system side, information flow networks make the structural features that determine this --- what information exists, how it flows, what a role can access --- explicit. Inquisitive semantics has not previously been applied to socio-technical transparency, and information flow networks have not previously been used as the semantic substrate for answerability.

We conclude this section with key related work and by clarifying how transparency, answerability, and explainability relate across the fields this work engages.

\subsection{Related Work}

Several frameworks are closely related to the present account on the institutional and accountability side. Raji et al.~\cite{raji2020closing} and Cobbe et al.~\cite{cobbe2021reviewable} develop internal auditing and reviewability models grounded in documentation, organizational processes, and differentiated responsibilities. Cobbe and Singh~\cite{cobbe2023supply} trace accountability across algorithmic supply chains. Novelli et al. develop scenario-based and proportional approaches to risk assessment under the AI Act~\cite{novelli2023taking, novelli2024scenario}, and OMNI~\cite{dignum2005omni} formally represents agent roles and normative constraints within multi-agent systems. These approaches provide important structure for roles, responsibilities, documentation, and risk across socio-technical systems. They do not, however, provide a common formal representation of how information is generated, transformed, and made accessible across both technical components and organizational processes. It is this relation between stakeholder information demands and the information-flow structure of the resulting socio-technical system that the present framework addresses.

Creel distinguishes three dissociable forms of computational transparency: functional transparency, concerning knowledge of the algorithmic functioning of the whole; structural transparency, concerning how that functioning is realized in code; and run transparency, concerning the program as executed in a particular instance~\cite{creel2020transparency}. Which form of transparency is useful is task-dependent. Our distinction between structural and trace-level answerability (Sections~\ref{subsec:structural-answerability}--\ref{subsec:trace-questions}) partially overlaps with this taxonomy but serves a different purpose: it asks whether a stakeholder question can be settled from the modeled process structure or instead requires information about a particular implementation or trace. Our use of \textit{structural} is therefore not Creel's code-level sense and, for computational components, often lies closer to her functional transparency. Creel's account also concerns epistemic access and understanding, whereas the present framework deliberately brackets the recipient's capacity to understand an available answer; we return to this as a question of effective transparency in Section~\ref{sec:discussion}.

Two formal-methods traditions share machinery with the present framework and deserve explicit comparison. Refinement calculus~\cite{back1998refinement} formalizes a stepwise relation between specifications and implementations, with refinement preserving correctness across levels of abstraction. Information flow security in the Goguen--Meseguer tradition~\cite{goguen1982security, sabelfeld2003language} characterizes noninterference as a relation between what is observable at different security levels, and has developed refinement-like notions for comparing how information propagates through a system. Both use refinement orderings and level- or role-dependent observability, and there is some structural similarity to our framework. The difference lies in what the refinement is \textit{for}: in refinement calculus, toward an executable program satisfying a specification; in information flow security, toward guarantees about what a given observer can or cannot learn; in our framework, toward a representation in which a stakeholder's questions are answerable. Our refinement relation is descriptive rather than prescriptive, and the property it supports --- answerability of transparency questions --- has no direct counterpart in either tradition.

\subsection{Transparency, Answerability, and Explainability}

Transparency, answerability, and explainability are terms used across legal, policy, and technical fields with partially overlapping but distinct meanings. This has led to considerable confusion about what AI systems are actually required to provide and to whom. We briefly clarify how we use these concepts within the present framework and how this use relates to other fields and legal obligations. We will do so by drawing a demand and supply sketch of a transparency relationship.

The demand side concerns what someone requires or needs to know. This requirement or need might have many different sources and is strongly dependent on the particular use case and situation. In this paper we generalize this with the concept of a \textit{role}, representing a group or type of stakeholder with a particular need for information. The clearest view of the demand side comes from \textit{legal explainability}, which specifies what affected persons are entitled to know about decisions that concern them.
This does not only involve those who are affected by a decision but just as much stakeholders with rights or obligations relating to the automated process, such as users, managers, auditors and support. In this paper we term these information demands the \textit{legitimate questions} of a stakeholder role, denoted $Q$, or $Q_R$ when indexed to a role. 

The supply side of this relationship concerns what information is available to satisfy the needs of the demand side. Here we find disclosure rules, reviewability frameworks and eXplainable AI (XAI). \textit{XAI explainability} concerns the extraction and delivery of explanations from an AI model (e.g a neural network or other AI artifact). Post-hoc methods --- saliency maps, counterfactual generators, surrogate models --- provide explanations aimed at informing some audience about the models inner workings or properties~\cite{arrieta2020explainable}. In some cases such explanations can be pragmatically satisfying without being structurally grounded in the system's actual decision process~\cite{paez2025axe}. \textit{Answerability}, in the sense developed in this paper, stands prior to both extraction and delivery of explanation: it asks what information capable of settling questions exists in the system's structure at all, before any extraction or delivery is attempted. A truthful explanation is only possible where answerability holds --- XAI methods applied to a system that does not structurally support a question can produce explanations that are at best approximations and at worst misleading.

The transparency framework developed here occupies a bridge position: it formalizes the structural precondition when particular information demands (demands side, from stakeholders) can be paired with answerability (supplier side, system) of that same information. This bridge represents both a realization of demands, and a conditional on what transparency means, relative to such demands. As Kaminski argues: \textit{``the substance of other underlying legal rights often
determines transparency’s substance."} If one has a right of correction or against discrimination, one needs to see errors and factors used, respectively. Without this correspondence, the rights are effectively void~\cite{kaminski2021right}. The framework directly lends to a reading of the requirement that explanations be meaningful (Art. 86 AI Act, and cf. Art. 15(1)(h) GDPR): an explanation may be considered meaningful for a role when it settles the questions that role legitimately needs answered. Like the rest of the framework this is a structural condition and not a doctrinal interpretation; the legitimate questions of roles can be constructed from (e.g.) legal interpretation, but are answered by structure. 

A system is thus transparent to a role when it is in principle able to provide the information this role needs. This, in principle, is of some importance to note since it explicitly ignores the delivery and capability of appreciation of the information sought. The requirement that information is to be understandable to the recipient is an important part of transparency of AI. The framework in this paper does not however engage with this question for two reasons. Firstly it is because the answerability of the system is a precondition to whether the demand can be satisfied no matter how pedagogical the delivery is. It is, in our opinion, of high importance that transparency and explanations are in fact true of the underlying process --- if this is not the case, the instrumental benefits of transparency e.g., for effective use, contestability, rights, fairness and safety --- are not realized. Secondly, in this framework the information in question is scoped by the demand, the legitimate questions of a role. Because these questions arise from the role's own rights, obligations, and practical responsibilities, their corresponding answers are already selected from a substantially narrower region of the system's information space that is comparatively meaningful and comprehensible to that role. This does not guarantee that every answer or representation will be understood, but it substantially constrains the remaining problem of presentation and comprehension. 

Note also that the relation between roles and questions developed here is demand-based: the legitimate questions of a role derive from its rights, obligations, and practical responsibilities, and they do not disappear when a stakeholder happens to already know the answer — it still makes sense to say a candidate must be able to answer "Was I hired?" even where that candidate knows the outcome.

The following Section ~\ref{sec:3_transparency_answerability} formalizes these structural conditions using information flow networks as the substrate that anchors the relation between the demands of a role and what the system can supply. 

\section{Transparency as Answerability}
\label{sec:3_transparency_answerability}
This section develops the formal machinery that lets transparency questions be stated and evaluated precisely. 

The construction has three layers. Information Networks (Definitions \ref{def:network}–-\ref{def:refinement}) are the structural objects: they represent decision processes as information flows and admit a refinement order capturing what it means for one network to be more specific than another. This is needed for us to talk about both what a particular structure captures and if two particular structures are agreeing or conflicting with each other. 

Propositions over networks (Definition \ref{def:propositions}) are the things that can be asked and settled; we strongly borrow this layer from \textit{inquisitive semantics}\cite{ciardelli2018inquisitive}, but with information networks playing the role possible worlds play there. This is what allows us to treat information networks, and what we attach to them and describe by them as propositions and questions.

Having established the base which allows us to talk about network structure, \textit{descriptions} (Definitions \ref{def:descriptions}–\ref{def:described-refinement}) attach substantive content to the structural scaffolding, so that site values and channel transformations — not just connectivity — can enter into what is proposed, asked and answered. 
The payoff is Proposition \ref{prop:answerability}, where answerability becomes a precise relation between a system and a question: conformance, partial settling, and full settling each have exact conditions on the underlying propositions and this information network structure.

\begin{definition}[Information Network] \label{def:network}

An \emph{information network} is a tuple $\mathbf{N} = \langle \mathbf{S}, 
\mathbf{C}, \geq \rangle$ where:
\begin{itemize}
    \item $\mathbf{S}$ is a finite set of \emph{sites}, representing states or repositories of information;
    \item $\mathbf{C}$ is a finite set of \emph{channels}, where each channel $C \in \mathbf{C}$ is a directed hyperedge connecting a non-empty set of input sites to a non-empty set of output sites;
    \item $\geq$ is a partial order on $\mathbf{S}$ such that for every channel $C \in \mathbf{C}$, each input site is greater than each output site.
\end{itemize}
The maximal elements of $\mathbf{S}$ under $\geq$ are the \emph{input sites} 
$\mathit{In}(\mathbf{N})$ and the minimal elements are the \emph{output sites} 
$\mathit{Out}(\mathbf{N})$ of $\mathbf{N}$.
\end{definition}

\begin{definition}[Refinement]
\label{def:refinement}
Let $\mathbf{N}_1$ and $\mathbf{N}_2$ be information networks. $\mathbf{N}_1$ is a \emph{refinement} of $\mathbf{N}_2$, written $\mathbf{N}_1 \preceq \mathbf{N}_2$, if $\mathbf{N}_1$ can be obtained from $\mathbf{N}_2$ by any finite combination of the following rules of refinement:

\textsc{Site Split}: \quad
$\phi \to A \to \psi \;\vdash\; \phi \to A{:}[A_1, \ldots, A_n] \to \psi$

\textsc{Ch. Intro}: \quad
$\phi \to A{:}[A_1 \sqcup A_2] \to \psi,\; A_1 \| A_2 \;\vdash\; \phi \to A_1 \to A_2 \to \psi$

\textsc{Ch. Split}: \quad
$\phi \to A \to B \to \psi \;\vdash\; \phi \to A \to C \to B \to \psi$

\textsc{Ch. Sharp. (disconnect)}: \quad
$\phi \to [A_1 \sqcup A_2] \to B \to \psi \;\vdash\; \phi \to A_1 \to B \to \psi,\; A_2$

\textsc{Ch. Sharp. (distribute)}: \quad
$\phi \to [A_1 \sqcup A_2] \to [B_1 \sqcup B_2] \to \psi \;\vdash\; \phi \to A_1 \to B_1 \to \psi,\; A_2 \to B_2$

\textsc{Out Sharp.}: \quad
$\phi \to A \to \psi \;\vdash\; \phi \to A \to \psi,\; A \to X$

\textsc{Situate}: \quad
$A \in \mathit{In}(\mathbf{N}),\; \phi \to A \to \psi \;\vdash\; X \to A \to \psi$.

Here $\phi$ and $\psi$ denote possibly empty upstream and downstream network context; site X is fresh; $A_1 \| A_2$ denotes incomparability under site pre-order $\geq$ and $\sqcup$ denotes disjoint decomposition of sub-sites.

We call the \emph{language} $L(N)$ of $\mathbf{N}$ all networks which can be obtained from $N$ through refinement.

\end{definition}

\begin{proposition}
\label{prop:refinement-order}
The refinement relation $\preceq$ is a partial order on information networks. $N' \preceq N$ iff $N' \in L(N)$ and
$L(\mathbf{N}) \subseteq L(\mathbf{Q}) \iff \mathbf{N} \preceq \mathbf{Q}$.
Two networks are \emph{incomparable} under $\preceq$ if neither refines 
the other. The least refined network is the single site $S$. The refinements of $S$, $L(S)$, contain all networks, which we term $\mathbb{IF}$.
\end{proposition}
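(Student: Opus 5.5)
Since $\preceq$ is defined as derivability by finite sequences of the refinement rules of Definition~\ref{def:refinement}, and $L(\mathbf{N})$ is the set of networks so derivable, the equivalence $\mathbf{N}' \preceq \mathbf{N} \iff \mathbf{N}' \in L(\mathbf{N})$ holds by definition. I will prove the remaining claims in order: reflexivity, transitivity, the language characterisation, antisymmetry, and finally the statements about the single site $S$. Throughout, networks are taken up to renaming of sites and channels, since \textsc{Site Split} and \textsc{Situate} introduce fresh names.

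\textbf{Reflexivity and transitivity.} Reflexivity follows because the empty combination of rules is allowed, so $\mathbf{N} \in L(\mathbf{N})$. For transitivity, concatenate the two derivations: if $\mathbf{N}_1 \preceq \mathbf{N}_2$ via a finite sequence $\sigma$ and $\mathbf{N}_2 \preceq \mathbf{N}_3$ via $\tau$, then $\tau\sigma$ derives $\mathbf{N}_1$ from $\mathbf{N}_3$. The side conditions (incomparability $A_1 \| A_2$, $A \in \mathit{In}(\mathbf{N})$, freshness of $X$) are checked locally against the network current at each step, so concatenation preserves validity.

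\textbf{Language characterisation.} Suppose $\mathbf{N} \preceq \mathbf{Q}$. For every $\mathbf{M} \in L(\mathbf{N})$, transitivity gives $\mathbf{M} \preceq \mathbf{Q}$, so $L(\mathbf{N}) \subseteq L(\mathbf{Q})$. Conversely, if $L(\mathbf{N}) \subseteq L(\mathbf{Q})$, then reflexivity gives $\mathbf{N} \in L(\mathbf{N}) \subseteq L(\mathbf{Q})$, hence $\mathbf{N} \preceq \mathbf{Q}$.

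\textbf{Antisymmetry.} This is the main obstacle. I would handle it with a monotone measure. Define a size vector, for instance $\mu(\mathbf{N}) = (|\mathbf{S}| + |\mathbf{C}|,\ \text{number of atomic sub-sites},\ \text{number of input-site/output-site incidences})$, ordered lexicographically. The plan is to check, rule by rule, that every application strictly increases $\mu$:
\begin{itemize}
\item \textsc{Site Split} adds sub-sites.
\item \textsc{Ch. Intro} and \textsc{Ch. Split} add a channel or site.
\item \textsc{Out Sharp.} and \textsc{Situate} add a fresh site $X$.
\item The two sharpening rules replace a composite site $[A_1 \sqcup A_2]$ by its parts. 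They therefore increase the number of sites, or at least the number of separately connected components.
\end{itemize}
If the sharpening rules leave $|\mathbf{S}|$ unchanged under some reading of $\sqcup$, I would fall back on a measure counting the number of maximal undecomposed sites (which strictly increases), or the number of connected components. Once strict monotonicity is established, any non-empty derivation strictly increases $\mu$. Then $\mathbf{N}_1 \preceq \mathbf{N}_2 \preceq \mathbf{N}_1$ would give $\mu(\mathbf{N}_1) \ge \mu(\mathbf{N}_2) \ge \mu(\mathbf{N}_1)$, with strict inequality unless both derivations are empty. Hence $\mathbf{N}_1 = \mathbf{N}_2$ up to renaming. The delicate part is choosing $\mu$ so that all seven rules, including the disconnect and distribute variants of channel sharpening, provably increase it.

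\textbf{The single site and $\mathbb{IF}$.} That the single site $S$ is least refined means it is the top element: every network refines $S$. I would prove that every network $\mathbf{N} = \langle \mathbf{S}, \mathbf{C}, \geq \rangle$ lies in $L(S)$ by induction on $|\mathbf{S}| + |\mathbf{C}|$.
\begin{enumerate}
\item Apply \textsc{Site Split} to $S$, producing a composite site whose sub-sites are the sites of $\mathbf{N}$.
\item Use \textsc{Ch. Intro} to separate incomparable groups into a channel chain, and \textsc{Ch. Split} to insert intermediate channels, so as to realise the hyperedges of $\mathbf{C}$ consistently with $\geq$. Building along a linear extension of $\geq$ is one concrete way to do this.
\item Use the two \textsc{Ch. Sharp.} rules to remove the connections not present in $\mathbf{N}$.
\item Use \textsc{Out Sharp.} and \textsc{Situate} to add extra outputs and inputs.
\end{enumerate}
Hence $L(S) = \mathbb{IF}$. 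Uniqueness of $S$ as least refined then follows from antisymmetry.
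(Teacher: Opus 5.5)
Your reflexivity, transitivity, $\mathbf{N}'\preceq\mathbf{N}\iff\mathbf{N}'\in L(\mathbf{N})$ and language-characterisation steps are exactly the paper's. For antisymmetry, the paper only asserts that every rule introduces fresh sites or ordering and none removes them. A monotone measure is the right way to make this rigorous, since irreversibility of single steps does not by itself exclude longer cycles. But you never fix a measure, and the components you list can decrease.

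\textsc{Ch. Sharp. (disconnect)} deletes the incidence between $A_2$ and $B$, so your third component drops. This also contradicts the paper's ``none removes ordering'', since $A_2\geq B$ is lost. If a composite is counted as a site alongside its parts, disconnect deletes a site outright. \textsc{Ch. Intro} trades the composite $A$ for the channel $A_1\to A_2$. So $|\mathbf{S}|+|\mathbf{C}|$ falls or stays put.

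Count atoms created and composites dissolved instead. Take $\mu=(\text{number of atomic sites},\ -\text{number of composite sites})$, ordered lexicographically. It strictly increases under \textsc{Site Split} (with at least two parts), \textsc{Ch. Split}, \textsc{Out Sharp.} and \textsc{Situate}, which add atomic sites. It also strictly increases under \textsc{Ch. Intro} and both sharpening rules, which keep the atoms and dissolve at least one composite. With that $\mu$ your concluding argument goes through.

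The more serious gap is the last part. The paper's proof gives no argument for it, and your construction aims at a claim that is false under the natural reading of the rules. Call the in-degree of a site the number of channels having it among their outputs. Each rule preserves in-degree at most $1$:
\begin{itemize}
\item \textsc{Site Split} changes no channels.
\item \textsc{Ch. Intro} hands $A$'s incoming channel (if any) to $A_1$ and gives $A_2$ only the new channel from $A_1$.
\item \textsc{Ch. Split} replaces $A\to B$ by $A\to C\to B$.
\item Both sharpening rules leave each target with exactly one incoming channel.
\item \textsc{Out Sharp.} gives the fresh $X$ one incoming channel.
\item \textsc{Situate} gives an input site, of in-degree $0$, one incoming channel.
\end{itemize}
Since $S$ has in-degree $0$, in-degree at most $1$ holds throughout $L(S)$.

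Now take the network with sites $a,b,c$ and two distinct channels $\{a\}\to\{c\}$ and $\{b\}\to\{c\}$. It satisfies Definition~\ref{def:network} but is not in $L(S)$. Your steps 2--3 cannot produce it: no rule ever adds a second channel into an existing site, and the sharpening rules only detach part of a composite source. The announced induction on $|\mathbf{S}|+|\mathbf{C}|$ is also never used.

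The last two sentences of the statement are tenable only as the definition $\mathbb{IF}:=L(S)$. Under that reading, ``$S$ is least refined'' is immediate, since every member of $\mathbb{IF}$ refines $S$ by definition, and uniqueness follows from antisymmetry as you say.
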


\begin{proof}
\emph{Reflexivity} and \emph{transitivity} follow directly: zero rule 
applications yield $\mathbf{N} \preceq \mathbf{N}$, and derivation sequences compose.
For \emph{Antisymmetry}: every rule introduces fresh sites or ordering and none removes them, so no rule application is reversible. If $\mathbf{N}_1 \preceq \mathbf{N}_2$ and $\mathbf{N}_2 \preceq \mathbf{N}_1$ then the rule sequence deriving each from the other must be empty, giving $\mathbf{N}_1 = \mathbf{N}_2$.

$\mathbf{N}' \preceq \mathbf{N} \iff \mathbf{N}' \in L(\mathbf{N})$ follows directly from the definition of $L(\mathbf{N})$ as the set of all refinements of $\mathbf{N}$.

If $L(\mathbf{N}) \subseteq L(\mathbf{Q})$ then $\mathbf{N} \in L(\mathbf{Q})$ since $\mathbf{N} \in L(\mathbf{N})$ by reflexivity, giving $\mathbf{N} \preceq \mathbf{Q}$. Conversely, if $\mathbf{N} \preceq \mathbf{Q}$ then for any $\mathbf{N}' \in L(\mathbf{N})$, $\mathbf{N}' \preceq \mathbf{N} \preceq \mathbf{Q}$ by transitivity, so $\mathbf{N}' \in L(\mathbf{Q})$.
\end{proof}

\begin{definition}[Information States and Propositions]
\label{def:propositions}
We describe propositions directly analogous to Inquisitive Semantics~\cite{ciardelli2018inquisitive}, where information states are sets of possible worlds and propositions are downward-closed sets of information states. Here networks take the role of possible worlds.

An \emph{information state} is a set of information networks $K \subseteq \mathbb{IF}$, representing a state of knowledge about which network describes the actual system — the networks consistent with what is known.

A \emph{proposition} is a non-empty downward-closed set of information states:
\[
\mathcal{P} \subseteq 2^{\mathbb{IF}} \text{ such that } K \in \mathcal{P} \text{ and } K' \subseteq K \implies K' \in \mathcal{P}
\]
Information states are ordered by inclusion: $K_1 \preceq K_2$ iff $K_1 \subseteq K_2$. A smaller information state is more refined — it admits fewer possible networks and represents more specific knowledge.

The \emph{language} of a network $L(\mathbf{N}) = \{ \mathbf{N}' \mid \mathbf{N}' \preceq \mathbf{N} \}$ is an information state — the set of 
all networks at least as specific as $\mathbf{N}$. The proposition generated by $\mathbf{N}$ is $\mathcal{L}(\mathbf{N}) = \{ K \mid K 
\subseteq L(\mathbf{N}) \}$ — the downward closure of $\{L(\mathbf{N})\}$ under information state inclusion.

For a proposition $\mathcal{P}$:
\begin{itemize}
    \item \emph{Alternatives} are the maximal elements of $\mathcal{P}$ under $\subseteq$ — the most specific information states in $\mathcal{P}$ that are not subsets of each other. Alternatives of a proposition are denoted by $\mathbf{?}\mathcal{P}$.
    \item The \emph{informative content} $\mathbf{!}\mathcal{P} = \bigcup \mathbf{?}\mathcal{P}$ — the union of all alternatives, representing what is common to all of them.
    \item $\mathcal{P}$ is \emph{inquisitive} if $|\mathbf{?}\mathcal{P}| > 1$ — multiple alternatives remain open.
    \item $\mathcal{P}$ is \emph{settled} if $|\mathbf{?}\mathcal{P}| = 1$ — a unique most specific information state.
    \item $\mathcal{P}$ is \emph{informative} if $\mathbf{!}\mathcal{P} \neq \mathbb{IF}$ — it excludes some networks.
\end{itemize}
\end{definition}

\begin{proposition}
\label{prop:lattice}
The set of all propositions over information networks, ordered by inclusion, forms a complete distributive lattice.

The top proposition $\top$ admits $\mathbb{IF}$, all networks. 
The bottom proposition $\bot$ admits $\{\emptyset\}$, none.
\end{proposition}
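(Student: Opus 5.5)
The plan is to show that propositions are closed under arbitrary intersections and arbitrary unions. The family of downward-closed subsets of a poset is then a sublattice of the powerset lattice, and distributivity is inherited from it. Throughout, the relevant poset is the set of information states $2^{\mathbb{IF}}$ ordered by inclusion. By Definition~\ref{def:propositions}, a proposition is a non-empty down-set of this poset.

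\textbf{Step 1: closure under the lattice operations.} Let $\{\mathcal{P}_i\}_{i\in I}$ be any family of propositions with $I\neq\emptyset$.
\begin{itemize}
\item \emph{Unions.} If $K\in\bigcup_i\mathcal{P}_i$ and $K'\subseteq K$, then $K\in\mathcal{P}_j$ for some $j$. Hence $K'\in\mathcal{P}_j$, so the union is downward closed. It is non-empty because each $\mathcal{P}_i$ is.
\item \emph{Intersections.} Downward closure is checked in the same componentwise way. Non-emptiness is the only point needing an argument. Every non-empty down-set of $2^{\mathbb{IF}}$ contains $\emptyset$, since $\emptyset\subseteq K$ for any member $K$. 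So $\emptyset$ lies in every proposition, and therefore in $\bigcap_i\mathcal{P}_i$.
\end{itemize}
Consequently, arbitrary joins and meets in the inclusion order are given by $\bigcup$ and $\bigcap$, and the set of propositions is complete for non-empty families.

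\textbf{Step 2: extremal elements.} The greatest proposition is $2^{\mathbb{IF}}$ itself, which is downward closed and non-empty. Its unique maximal element is $\mathbb{IF}$, which matches the statement that $\top$ ``admits'' all networks. The least proposition is $\{\emptyset\}$. It is a down-set, and by Step 1 it is contained in every proposition, so it is the bottom $\bot$. These two elements supply the empty meet and the empty join, which finishes completeness.

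\textbf{Step 3: distributivity.} Because meets and joins are set intersection and set union, the identities
\[
\mathcal{P}\cap(\mathcal{Q}\cup\mathcal{R})=(\mathcal{P}\cap\mathcal{Q})\cup(\mathcal{P}\cap\mathcal{R})
\]
and its dual hold, since they hold in the Boolean algebra $2^{2^{\mathbb{IF}}}$. The infinite law $\mathcal{P}\cap\bigcup_i\mathcal{Q}_i=\bigcup_i(\mathcal{P}\cap\mathcal{Q}_i)$ follows in the same way.

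\textbf{Main obstacle.} The only delicate point is how the non-emptiness requirement interacts with meets and with $\bot$. The key observation is that $\emptyset$ belongs to every proposition, so non-emptiness is never lost. This must be stated explicitly; without it, one might worry that the intersection of two propositions could be empty and hence fail to be a proposition. Finiteness of $\mathbb{IF}$ is not needed anywhere, because $L(S)=\mathbb{IF}$ from Proposition~\ref{prop:refinement-order} is simply used as the ambient set.
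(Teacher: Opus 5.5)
Your proof is correct and takes essentially the same route as the paper's: down-sets are closed under arbitrary unions and intersections, the bounds are $2^{\mathbb{IF}}$ and $\{\emptyset\}$, and distributivity is inherited from the set operations. You also spell out two points the paper's proof passes over: that $\emptyset$ belongs to every proposition, so intersections stay non-empty, and that the empty family needs separate handling via $\top$ and $\bot$, since in the paper's completeness step $\bigcup$ of the empty subfamily is $\emptyset$, which is not a proposition.
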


\begin{proof}
\emph{Bounds}: $\mathcal{T} = 2^{\mathbb{IF}}$ is trivially downward-closed and serves as $\top$. $\mathcal{B} = \{\emptyset\}$ is downward-closed since $\emptyset$ has no proper subsets, and serves as $\bot$.

\emph{Pairwise meets and joins}: For any $\mathcal{P}, \mathcal{Q}$, their intersection $\mathcal{P} \cap \mathcal{Q}$ and union $\mathcal{P} \cup \mathcal{Q}$ are both downward-closed: if $K \in \mathcal{P} \cap \mathcal{Q}$ and $K' \subseteq K$ then $K' \in \mathcal{P}$ and $K' \in \mathcal{Q}$ by downward closure of each, and similarly for union.

\emph{Completeness}: For any $\mathcal{F}' \subseteq \mathcal{F}$, both $\bigcap \mathcal{F}'$ and $\bigcup \mathcal{F}'$ are downward-closed: if $K \in \bigcup \mathcal{F}'$ then $K \in \mathcal{P}$ for some $\mathcal{P} \in \mathcal{F}'$, and any $K' \subseteq K$ is in $\mathcal{P}$ by downward closure, hence in $\bigcup \mathcal{F}'$. The argument for $\bigcap \mathcal{F}'$ is analogous.

\emph{Distributivity}: $\mathcal{P} \cap (\mathcal{Q} \cup \mathcal{R}) = (\mathcal{P} \cap \mathcal{Q}) \cup (\mathcal{P} \cap \mathcal{R})$ for all sets, and hence for propositions.
\end{proof}

\begin{definition}[Alternative Notation]
\label{def:alternatives}
Alternative notation provides a compact way of writing inquisitive propositions directly. We distinguish two forms:

\begin{itemize}
    \item \emph{Existence inquiry.} For a sub-network $\mathbf{M}$, the expression $\mathbf{M}?$ denotes the proposition asking whether $\mathbf{M}$ is present:
    \[
    \mathbf{M}? := \{\,L(\mathbf{N}\text{ with }\mathbf{M}) \alt L(\mathbf{N}\text{ without }\mathbf{M})\,\}.
    \]

    \item \emph{Contents inquiry.} For any object $X$ whose admissible alternatives determine information states $K_1,\ldots,K_n$, the expression $\mathbf{?}X$ denotes the proposition whose alternatives are exactly those information states:
    \[
    \mathbf{?}X := \{\,K_1 \alt \cdots \alt K_n\,\}.
    \]
\end{itemize}

More generally, for explicitly given alternative sub-structures $\mathbf{M}_1,\mathbf{M}_2$, we write
\[
\mathbf{?}\{\mathbf{M}_1 \alt \mathbf{M}_2\} := \{\,L(\mathbf{N}\text{ with }\mathbf{M}_1) \alt L(\mathbf{N}\text{ with }\mathbf{M}_2)\,\}.
\]

Here $\{K_1 \alt \cdots \alt K_n\}$ denotes the proposition whose alternatives are exactly the information states $K_1,\ldots,K_n$.

These expressions denote propositions, not networks. For example, 

$A \to \mathbf{?}B \to C$ asks what intermediate content $B$ takes between $A$ and $C$, 

$\mathbf{?}d(S)$ asks which admissible description $d(S)$ has,
whereas 

$(A \to B \to C)?$ asks whether that sub-network is present at all.
\end{definition}

\begin{definition}[Descriptions]
\label{def:descriptions}
A \emph{description} attaches informative content to the sites and channels of a bare network $\mathbf{N}$. Each site $S \in \mathbf{S}$ is assigned a \emph{site description} $d(S)$ — a proposition over the possible values $S$ can take. Each channel $C \in \mathbf{C}$ is assigned a \emph{channel description} $d(C)$ — a characterisation of the transformation $C$ performs.

A site description $d(S)$ is:
\begin{itemize}
    \item \emph{settled} if it contains a single value — $S$ takes one determinate value;
    \item \emph{inquisitive} if it contains multiple alternatives — the value of $S$ is open.
\end{itemize}
A \emph{described network} is a bare network $\mathbf{N}$ together with descriptions for all its sites and channels. The descriptions do not alter the structure of $\mathbf{N}$ — sites, channels, and the partial order $\geq$ remain as in Definition~\ref{def:network}.

\end{definition}

\begin{remark}
The kind of content carried by site descriptions may vary: qualitative 
descriptions of what information a site holds, specific values in a run, 
probability distributions over alternatives, or transformation functions in 
an implementation. The definition accommodates all of these uniformly as 
propositions over possible values.
\end{remark}

\begin{definition}[Refinement of Described Networks]
\label{def:described-refinement}
Let $\mathbf{N}_1$ and $\mathbf{N}_2$ be described networks. $\mathbf{N}_1 \preceq \mathbf{N}_2$ if $\mathbf{N}_1$ can be obtained from $\mathbf{N}_2$ by any finite combination of the structural refinement rules (Definition~\ref{def:refinement}) together with the following descriptive refinement rules:

\textsc{Desc. Refine}: \quad
$d'(A) \preceq d(A),\; \phi \to d(A) \to \psi \;\vdash\; \phi \to d'(A) \to \psi$,
where $\epsilon \preceq d(A)$ for any description $d(A)$.

\textsc{Site Split-D}: \quad
$d(A_1), \ldots, d(A_n) \lesssim d(A),\; \phi \to d(A) \to \psi \;\vdash\; \phi \to d(A){:}[d(A_1), \ldots, d(A_n)] \to \psi$,
where $\lesssim$ denotes that the component descriptions jointly constitute a valid decomposition of the parent description, relative to the description language in use. Further formalization of $\lesssim$ is left to further work but might not be fully specifiable for linguistic descriptions.
\end{definition}

\begin{proposition}
\label{prop:described-order}
Combined structural and descriptive refinement is a partial order on described networks.
\end{proposition}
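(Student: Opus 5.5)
The plan mirrors the proof of Proposition~\ref{prop:refinement-order}. The combined relation $\preceq$ on described networks is again defined as derivability by a finite sequence of rule applications, now drawn from the structural rules of Definition~\ref{def:refinement} together with \textsc{Desc. Refine} and \textsc{Site Split-D}. I would verify reflexivity, transitivity and antisymmetry in turn. Reflexivity comes from the empty derivation. Transitivity comes from concatenating derivation sequences, since every rule applies locally to a described network and produces a described network, so the composed sequence is again a legitimate derivation.

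For antisymmetry, I would introduce a well-founded measure $\mu$ on described networks and show that every rule application that changes the network strictly increases $\mu$. The measure will be a lexicographic pair. The first component is the structural data already used in Proposition~\ref{prop:refinement-order}: the number of sites, the number of channels, and the size of the site order. Each structural rule adds fresh sites, channels, or ordering pairs and removes none, so it strictly increases this component. \textsc{Site Split-D} does the same, since it adds the sub-sites $A_1,\ldots,A_n$. \textsc{Desc. Refine} leaves the structure unchanged, so it must be handled by the second component. That component is the multiset of site and channel descriptions, ordered pointwise by the description refinement $\preceq$ on propositions, that is, by inclusion of propositions as in Proposition~\ref{prop:lattice}. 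Here ``increase'' means ``becomes strictly more refined''.

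Once $\mu$ is in place, antisymmetry follows. Suppose $\mathbf{N}_1 \preceq \mathbf{N}_2$ and $\mathbf{N}_2 \preceq \mathbf{N}_1$, and at least one of the two derivations contains a step that changes the network. Then $\mu(\mathbf{N}_1) > \mu(\mathbf{N}_2) > \mu(\mathbf{N}_1)$, which is a contradiction. So every step in both derivations is a no-op, and $\mathbf{N}_1 = \mathbf{N}_2$.

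The main obstacle is \textsc{Desc. Refine}, because the rule as stated permits $d'(A) \preceq d(A)$ non-strictly. I would read an application with $d'(A) = d(A)$ as a trivial, identity step. Strict applications then move strictly downward in the inclusion order on propositions. That order is a genuine partial order, being inclusion in the lattice of Proposition~\ref{prop:lattice}, so no cycle of strict refinements can return to its starting description. The convention $\epsilon \preceq d(A)$ makes the empty description the top of this order, so it fits the same pattern.

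A subtler point is that the descriptive order need not be well-founded if the value spaces are infinite. Antisymmetry, however, only requires that the combined order be acyclic, not that it be well-founded. I would therefore phrase the argument as: a nonempty cycle would force some component of $\mu$ to be strictly less than itself under a partial order, which is impossible. If time permits, I would also check that structural steps and descriptive steps commute well enough that $\mu$'s lexicographic form is respected. Even without that, mixed derivations only ever move each component monotonically, so a mixed cycle still forces every step to be trivial.
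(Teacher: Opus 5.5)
Your strategy is the paper's: reflexivity and transitivity come from empty and concatenated derivations, and antisymmetry from the irreversibility of the rules. Your lexicographic potential $\mu$ makes explicit the monotone quantity that the paper's ``every rule adds, none removes'' argument relies on implicitly. The step that fails is your claim that every structural rule adds sites, channels or ordering pairs and removes none.

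\textsc{Ch. Sharp. (disconnect)} rewrites $\phi \to [A_1 \sqcup A_2] \to B \to \psi$ into $\phi \to A_1 \to B \to \psi,\; A_2$. No site and no channel is added. The channel into $B$ loses the input $A_2$, and with it the pair $A_2 \geq B$ (and the pairs it induced downstream) disappears. So the first component of $\mu$ stays the same or decreases while the network changes. Your antisymmetry argument needs strict increase at every nontrivial step, so it breaks exactly here.

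\textsc{Ch. Sharp. (distribute)} also deletes the cross pairs $A_1 \geq B_2$ and $A_2 \geq B_1$. It survives only because it adds a channel, so it matters how you combine the three counts; you left that unspecified.

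The paper's proof has the same hole. Its argument for Proposition~\ref{prop:refinement-order} asserts that no rule removes ordering, and its proof of the present statement simply defers to that. You inherited the error rather than introduced it, but it still needs repairing.

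One repair consistent with the rules as written is to order the structural component lexicographically:
\begin{enumerate}
\item First, the number of channels. \textsc{Ch. Intro}, \textsc{Ch. Split}, distribute, \textsc{Out Sharp.} and \textsc{Situate} each add one, and no rule removes one.
\item Second, the number of atomic sites, which \textsc{Site Split} and \textsc{Site Split-D} raise whenever they genuinely split.
\item Third, the negated number of incidences between channels and atomic sites. Disconnect strictly increases this, and it is consulted only when the first two components are unchanged.
\end{enumerate}
Check this against your reading of how bracketed sites $A{:}[\ldots]$ are counted, so that dissolving a bracket is not counted as losing a site.

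With that in place, your description component handles \textsc{Desc. Refine} as you describe. No commutation of structural and descriptive steps is needed, since a lexicographic potential is verified one step at a time.

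A minor point: under the paper's convention that smaller means more refined, $\epsilon \preceq d(A)$ places $\epsilon$ at the bottom, not the top, of the description order. Acyclicity is unaffected either way.
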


\begin{proof}
Reflexivity and transitivity follow as in Proposition~\ref{prop:refinement-order}. For antisymmetry, \textsc{Desc. Refine} strictly specializes or introduces descriptions and \textsc{Site Split-D} ensure that component sites capture or refine the original site description — neither is reversible by any rule. The argument of Proposition~\ref{prop:refinement-order} therefore applies directly.
\end{proof}

Having built propositions and questions, we can now state precisely what it means for a system to answer one. The following proposition gives three conditions --- conformance, partial settling, and full settling --- that determine how a system proposition $\mathcal{P}$ relates to a question $\mathcal{Q}$. This is the root result the rest of the paper builds on. %

\begin{proposition}[Answerability]
\label{prop:answerability}
Let $\mathcal{Q}$ be an inquisitive proposition with informative content 
$!\mathcal{Q}$ and inquisitive content $\mathbf{?}\mathcal{Q} = \{L(\mathbf{N}_1), 
\ldots, L(\mathbf{N}_k)\}$, and let $\mathcal{P}$ be a proposition 
representing a system. Then:

\begin{enumerate}
    \item \emph{Conformance} (necessary for any settling): 
    $\mathcal{P} \subseteq !\mathcal{Q}$ — $\mathcal{P}$ must conform 
    to the presupposition of $\mathcal{Q}$.

    \item \emph{Partial settling} (sufficient for at least partial 
    settling):
    \[
    \mathcal{P} \subset !\mathcal{Q} \quad \text{and} \quad 
    \mathcal{P} \not\subseteq \text{meet}(\mathcal{Q})
    \]
    Any such $\mathcal{P}$ excludes at least one alternative in 
    $\mathbf{?}\mathcal{Q}$, favouring some alternatives over others. When 
    $\text{meet}(\mathcal{Q}) = \bot$ — the alternatives are mutually 
    incompatible — the second condition is trivially satisfied for any 
    non-$\bot$ $\mathcal{P}$, and strict refinement of $!\mathcal{Q}$ 
    alone suffices.

    \item \emph{Full settling} (necessary and sufficient for full 
    settling):
    \[
    \exists!\, L(\mathbf{N}_i) \in \mathbf{?}\mathcal{Q} : \mathcal{P} \subseteq 
    L(\mathbf{N}_i)
    \]
    $\mathcal{P}$ is contained within the language of exactly one 
    alternative — selecting it and excluding all others.
\end{enumerate}
\end{proposition}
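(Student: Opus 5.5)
The three clauses are all statements about containment in the complete lattice of Proposition~\ref{prop:lattice}, so I will prove each by unfolding the definitions of settling, alternatives and informative content from Definition~\ref{def:propositions}. Throughout I read ``$\mathcal{P}$ settles $\mathcal{Q}$'' as ``every information state admitted by $\mathcal{P}$ lies inside a single alternative of $\mathcal{Q}$''. ``Partially settles'' means $\mathcal{P}$ lies inside the union of a proper subfamily of $\mathbf{?}\mathcal{Q}$. I identify $\mathcal{P}$ with its informative content $!\mathcal{P}$ wherever it is compared with information states, so that $\mathcal{P}\subseteq L(\mathbf{N}_i)$ abbreviates $!\mathcal{P}\subseteq L(\mathbf{N}_i)$. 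Stating this reading explicitly is the first step, since the statement mixes propositions and information states. Finally, $\text{meet}(\mathcal{Q})$ denotes $\bigcap_i L(\mathbf{N}_i)$, the lattice meet of the alternatives.

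\emph{Conformance.} Suppose $\mathcal{P}$ settles $\mathcal{Q}$, fully or partially. Then $!\mathcal{P}$ is contained in some union of alternatives. By the definition $!\mathcal{Q}=\bigcup\mathbf{?}\mathcal{Q}$ that union is contained in $!\mathcal{Q}$, so $\mathcal{P}\subseteq !\mathcal{Q}$. This is the necessity direction. It also explains the presupposition reading: a system admitting a network outside every $L(\mathbf{N}_i)$ cannot select any alternative.

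\emph{Partial settling.} Assume $\mathcal{P}\subset !\mathcal{Q}$ and $\mathcal{P}\not\subseteq\bigcap_i L(\mathbf{N}_i)$. I want to show that some alternative $L(\mathbf{N}_j)$ is excluded, in the sense that $\mathcal{P}\not\subseteq L(\mathbf{N}_j)$. The second hypothesis gives a network $\mathbf{N}\in\,!\mathcal{P}$ with $\mathbf{N}\notin L(\mathbf{N}_j)$ for some $j$; this yields the excluded alternative. The first hypothesis ensures $\mathcal{P}$ still lives inside the union, so it discriminates among alternatives rather than leaving $\mathcal{Q}$'s presupposition. For the remark about incompatible alternatives: if the meet is $\bot$, then a non-$\bot$ $\mathcal{P}$ cannot be contained in it, so only strictness of the first condition remains.

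\emph{Full settling.} For sufficiency, if $\mathcal{P}\subseteq L(\mathbf{N}_i)$ for exactly one $i$, then every state of $\mathcal{P}$ is a subset of $L(\mathbf{N}_i)$, so it belongs to the downward-closed proposition $\mathcal{L}(\mathbf{N}_i)$. Uniqueness means the other alternatives are not entailed; intersecting $\mathcal{Q}$ with $\mathcal{P}$ then has the single maximal element $L(\mathbf{N}_i)\cap !\mathcal{P}$ among the relevant states, so it is settled in the sense of Definition~\ref{def:propositions}. For necessity, if $\mathcal{P}\cap\mathcal{Q}$ has a unique alternative, that alternative sits below some $L(\mathbf{N}_i)$ because alternatives of $\mathcal{Q}$ are maximal. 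That gives existence. If $\mathcal{P}$ were contained in two distinct alternatives, it would lie in their intersection, and the selection would not be unique; that gives uniqueness.

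The main obstacle is this uniqueness clause. When alternatives overlap, $\mathcal{P}$ could lie inside two of them, namely inside their meet. I would handle this by arguing that such a $\mathcal{P}$ selects no single alternative and hence does not count as settling, which matches the ``excluding all others'' gloss. I would also note that in the paper's typical use the alternatives are incompatible, so the issue disappears.
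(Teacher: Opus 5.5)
Your conformance argument is fine. The partial-settling argument, however, proves nothing beyond its own hypothesis. What you derive, some $j$ with $!\mathcal{P}\not\subseteq L(\mathbf{N}_j)$, is just the unfolding of $!\mathcal{P}\not\subseteq\bigcap_i L(\mathbf{N}_i)$. The hypothesis $\mathcal{P}\subset\,!\mathcal{Q}$ is never used. You never reach your own definition of partial settling ($!\mathcal{P}$ inside the union of a proper subfamily of $\mathbf{?}\mathcal{Q}$), nor the paper's notion of exclusion, $\mathcal{P}\cap L(\mathbf{N}_j)=\emptyset$.

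No argument can close this for arbitrary $\mathcal{P}$. Take a proper refinement $\mathbf{M}$ of $\mathbf{N}_1$ (one \textsc{Out Sharp.} step will do) and set $\mathcal{P}=2^{K}$ with $K=\,!\mathcal{Q}\setminus\{\mathbf{M}\}$. Distinct alternatives are $\subseteq$-incomparable, so by Proposition~\ref{prop:refinement-order} $\mathbf{N}_j\notin L(\mathbf{N}_i)$ for $i\neq j$. Hence $\mathbf{M}\in L(\mathbf{N}_1)$ differs from every $\mathbf{N}_j$. Then $K\subsetneq\,!\mathcal{Q}$ (it misses $\mathbf{M}$) and $K\not\subseteq\bigcap_i L(\mathbf{N}_i)$ (it contains $\mathbf{N}_1\notin L(\mathbf{N}_2)$). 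Yet $K$ contains every $\mathbf{N}_j$, so it meets every alternative and lies in no union of a proper subfamily.

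The paper's proof makes exactly this unjustified jump (``therefore $\mathcal{P}\cap L(\mathbf{N}_j)=\emptyset$''), so you cannot borrow the missing step from it. You have two options:
\begin{itemize}
\item read ``excludes'' as mere non-containment and say plainly that clause~2 is then a tautology; or
\item strengthen the hypotheses. For instance, if $\mathcal{P}=\mathcal{L}(\mathbf{P})$ is generated by a single network, conformance alone forces $\mathbf{P}\in L(\mathbf{N}_i)$ for some $i$, hence $L(\mathbf{P})\subseteq L(\mathbf{N}_i)$.
\end{itemize}

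In the full-settling clause you correctly spot that overlapping alternatives break the necessity of $\exists!$. But ruling that a $\mathcal{P}$ inside $L(\mathbf{N}_1)\cap L(\mathbf{N}_2)$ ``does not count as settling'' is a change of definition, not a proof. It also contradicts the reading you use in the same clause: for such $\mathcal{P}$, $\mathcal{P}\cap\mathcal{Q}=2^{!\mathcal{P}}$, which does have a unique alternative. More generally, you move between two inequivalent readings of ``settles'': containment in a single alternative, and uniqueness of the alternative of $\mathcal{P}\cap\mathcal{Q}$.

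The paper obtains uniqueness, and the exclusion of all other alternatives, from the claim that distinct alternatives are disjoint (``no network refines into both''). That is the ingredient your argument lacks. It does not follow from maximality, which only says neither alternative contains the other, and clause~2 itself contemplates $\text{meet}(\mathcal{Q})\neq\bot$. So the honest repair is to assume pairwise disjointness of $\mathbf{?}\mathcal{Q}$ in clause~3. Then $\exists$ and $\exists!$ coincide for non-$\bot$ $\mathcal{P}$, and containment in $L(\mathbf{N}_i)$ really does exclude the rest.

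Finally, the existence half of your necessity argument needs conformance. Knowing that the unique alternative $K^*$ of $\mathcal{P}\cap\mathcal{Q}$ lies in some $L(\mathbf{N}_i)$ does not give $!\mathcal{P}\subseteq L(\mathbf{N}_i)$, because the intersection discards the networks of $!\mathcal{P}$ outside $!\mathcal{Q}$. You need $!\mathcal{P}\subseteq\,!\mathcal{Q}$ to conclude $!\mathcal{P}=\bigcup_j(!\mathcal{P}\cap L(\mathbf{N}_j))\subseteq K^*$.
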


\begin{proof}
\emph{Conformance}: If $\mathcal{P} \not\subseteq !\mathcal{Q}$ then 
$\mathcal{P}$ contains a network outside $!\mathcal{Q}$, meaning it 
does not conform to the shared structure of $\mathcal{Q}$'s alternatives. 
Then $\mathcal{P} \cap \mathcal{Q} = \emptyset$ and no settling is possible.

\emph{Partial settling}: If $\mathcal{P} \subset !\mathcal{Q}$ and 
$\mathcal{P} \not\subseteq \text{meet}\mathcal{Q}$ then $\mathcal{P}$ 
has committed to structure beyond $!\mathcal{Q}$ that is not common to all 
alternatives. Therefore $\mathcal{P} \cap L(\mathbf{N}_j) = \emptyset$ for 
at least one $j$, excluding that alternative. 
When $\text{meet}(\mathcal{Q}) = \bot$, the second condition holds for any 
non-$\bot$ $\mathcal{P}$.

\emph{Full settling}: If $\mathcal{P} \subseteq L(\mathbf{N}_i)$ then 
$\mathcal{P} \cap L(\mathbf{N}_j) = \emptyset$ for all $j \neq i$ since 
$\mathbf{N}_i$ and $\mathbf{N}_j$ are distinct maximal elements — no 
network refines into both. Conversely, if $\mathcal{P}$ fully settles 
$\mathcal{Q}$ then exactly one $\mathcal{P} \cap L(\mathbf{N}_i) \neq 
\emptyset$, meaning $\mathcal{P} \subseteq L(\mathbf{N}_i)$.
\end{proof}

\begin{definition}[Projection]
Let $\mathbf{N}$ be an information network and $c \in \mathbf{C}$ a channel. 
The \emph{projection} $\mathbf{N}|_c$ is the sub-network of $\mathbf{N}$ 
induced by $\{s \in \mathbf{S} \mid \mathit{In}(c) \geq s \geq \mathit{Out}(c)\}$.
\end{definition}

\begin{definition}[Reachability]
Let $\mathbf{N}$ be an information network. The \emph{downstream} of a site 
$s \in \mathbf{S}$ is:
\[
\downarrow(s) = \{s' \in \mathbf{S} \mid s \geq s'\}
\]
The \emph{upstream} of a site $s \in \mathbf{S}$ is:
\[
\uparrow(s) = \{s' \in \mathbf{S} \mid s' \geq s\}
\]
A site $a$ \emph{reaches} a site $b$ if $b \in \downarrow(a)$. 
The \emph{path} from $a$ to $b$ is $\pi(a,b) = \downarrow(a) \cap \uparrow(b)$, 
which is non-empty if and only if $a$ reaches $b$.
\end{definition}

\begin{remark}
$\pi(a,b) \neq \emptyset$ if and only if $a$ reaches $b$. If $a$ reaches 
$b$ then $b \in \downarrow(a)$ and $b \in \uparrow(b)$ by reflexivity of 
$\geq$, so $b \in \pi(a,b)$. Conversely, if $s' \in \pi(a,b)$ then 
$a \geq s' \geq b$ and transitivity of $\geq$ gives $a \geq b$.
\end{remark}

With the base formal machinery in place; the remaining sections apply and extend it to structural (Section~\ref{sec:structural}), compositional (Section~\ref{sec:compositional}), and role-dependent (Section~\ref{sec:role-dep}) answerability.

\section{The Use Case}
\label{sec:use-case}

This paper draws on a pre-integration assessment of a commercial AI-supported recruitment tool under consideration for adoption by a public sector municipality. 
The municipality handles a broad range of recruitment needs --- from senior management 
positions to high-volume temporary staffing --- and operates under established guidelines for competence-based and equal-treatment recruitment. The AI tool is a high-risk system under the EU AI Act, a classification the Provider explicitly 
accepts.

The case material was developed through semi-structured stakeholder interviews and document analysis, conducted as part of a pre-integration assessment commissioned by the Deployer organization. A participatory modeling and elicitation approach was employed throughout. Interviews were held with the Deployment organization including recruiters, hiring managers and, technical staff, as well as with representatives of the Provider, amounting total more than 10 hours of 1:1 interviews. These interviews covered job role, system functionality, organizational practices, anticipated integration scenarios and responsible AI practices. In addition to the interviews documentation on the Deployer's internal policies and guidelines as well as the Providers compliance documentation was examined and considered in the assessment.

Information flow models were constructed iteratively from interview material and validated with stakeholders against their understanding of the systems and processes involved. A full account of the IFM modeling methodology is available in~\cite{ifm2025guide}.%
The pre-integration perspective --- that no integration had yet taken place at the time of the assessment --- is analytically valuable: it makes the Provider/Deployer boundary visible as a structural fact rather than obscuring it within a single deployed system.

Both the Deployer and the Provider demonstrated a strong commitment to AI Act compliance and responsible AI use, which facilitated a more thorough and in-depth case study.

In the following two sections we present both the Deployer system (the municipal recruitment process before any tool AI integration) and the Provider system (the AI recruitment support tool) as information flow models. This will allow us to see how they operate, reason over them and later in Section \ref{sec:compositional} compose them.  

\subsection{The Deployer Process}

In this section we describe the Deployer's recruitment process, conceptually and as an information flow model.

\begin{table*}[t]
\centering
\resizebox{0.95\textwidth}{!}{%
\begin{tabular}{cllll}
\toprule
\textbf{\#} & \textbf{Channel} & \textbf{Performed by} & \textbf{Operation} & \textbf{Stage} \\
\midrule
d1 & $\mathit{D_0} 
    \xrightarrow{\text{Application}} \mathit{CV}$ 
    & Candidate & CV submission & Sourcing \\
d2 & $[\mathit{RP}], \mathit{CV} 
    \xrightarrow{\text{Initial scoring}} \mathit{Score_R}$ 
    & Recruiter & 4-grade candidate scoring & Assessment \\
d3 & $[\mathit{RP}], \mathit{CV} 
    \xrightarrow{\text{Initial scoring}} \mathit{Score_{HM}}$ 
    & Hiring Manager & 4-grade candidate scoring & Assessment \\
d4 & $[\mathit{RP}], \mathit{D_0} \times \mathit{Score_R}, \mathit{Score_{HM}}
    \xrightarrow{\text{Selection meeting}} \mathit{D_1}$ 
    & Recruiter, Hiring Manager & Collaborative candidate selection & Selection \\
d5 & $\mathit{Candidate}
    \xrightarrow{\text{Personality test}} \mathit{Score_{PT}}$ 
    & External Provider & Personality testing & Interview \\
d6 & $[\mathit{RP}], \mathit{Candidate}, \mathit{Score_{PT}}
    \xrightarrow{\text{Interview}} \mathit{Score_{comp}}$ 
    & Recruiter, Hiring Manager & Competence-based interview & Interview \\
d7 & $\mathit{D_1} \times \mathit{Score_{comp}}
    \xrightarrow{\text{Ref. check}} \mathit{D_2}$ 
    & Recruiter & Reference and credential verification & Validation \\
d8 & $\mathit{D_2} 
    \xrightarrow{\text{Final selection}} \mathit{D_3}$ 
    & Hiring Manager & Final candidate selection & Selection \\
\bottomrule
\end{tabular}
}
\caption{Deployer recruitment process: channels, operations, and stages.}
\label{tab:deployer}
\end{table*}

\begin{table*}[t]
\centering
\scriptsize
\caption{Illustrative roles and legitimate questions in the Deployer recruitment
process. Roles reflect theoretical information demands and accountability obligations rather than job titles. Questions are illustrative and grounded in the case-study context.
Classes distinguish \textit{process structure}, \textit{specific run},
\textit{counterfactual}, \textit{statistics over runs}, and
\textit{derived oversight} questions. Parenthetical annotations indicate when
a question is evaluated specifically against preserved logs.}
\label{tab:roles}

\begin{tabularx}{\textwidth}{@{}p{0.85cm} X p{2.8cm}@{}}
\toprule
ID & \textbf{Example legitimate question} & \textbf{Class} \\
\midrule

\multicolumn{3}{l}{\textbf{Candidate} \quad
\small{\textit{Subject of assessment; seeks fair evaluation}}} \\
C1 & Was I hired, and at what step was I eliminated?
   & specific run \\
C2 & What criteria was I assessed against?
   & process structure \\
C3 & How did those criteria affect my outcome?
   & specific run \\
C4 & What would have changed my result? (structure/logs)
   & counterfactual \\
C5 & What happened in my assessment? (logs)
   & specific run \\

\midrule

\multicolumn{3}{l}{\textbf{Recruiter} \quad
\small{\textit{Manages process; assesses candidates}}} \\
R1 & What criteria should be assessed against?
   & process structure \\
R2 & Does this candidate meet the criteria?
   & specific run \\
R3 & What supports my assessment?
   & specific run \\
R4 & What did I base my scoring of this candidate on? (logs)
   & specific run \\
R5 & Can I verify that candidates were treated consistently and without discrimination?
   & derived oversight \\

\midrule

\multicolumn{3}{l}{\textbf{Hiring Manager} \quad
\small{\textit{Makes final selection; bears formal accountability}}} \\
HM1 & Does this candidate meet the job requirements?
    & specific run \\
HM2 & Which candidate should I select?
    & specific run \\
HM3 & Can I justify this candidate over the others?
    & specific run \\
HM4 & What was the basis of the final selection decision? (logs)
    & specific run \\
HM5 & Can I take responsibility for this decision, given how it was reached?
    & derived oversight \\

\midrule

\multicolumn{3}{l}{\textbf{HR Manager} \quad
\small{\textit{Equal-treatment compliance; organizational oversight}}} \\
HR1 & Is there an authoritative document setting out criteria/procedure?
    & process structure \\
HR2 & How was this decision reached? (logs)
    & specific run \\
HR3 & Were the criteria and procedures applied consistently? (logs)
    & statistics, runs \\
HR4 & Which channels touch personal data, and is that flow traceable?
    & process structure \\
HR5 & Is the process designed to treat everyone the same way?
    & process structure \\

\midrule

\multicolumn{3}{l}{\textbf{Appeals Assessor} \quad
\small{\textit{Evaluates contested decisions; reconstructs decision basis}}} \\
AA1 & What was the decision based on?
    & specific run \\
AA2 & Would the decision have differed under the facts raised? (structure/logs)
    & counterfactual \\
AA3 & Does the appellant's account match what happened? (logs)
    & specific run \\
AA4 & Was the designed procedure actually followed? (logs)
    & specific run \\

\bottomrule
\end{tabularx}
\end{table*}

\begin{figure*}[t]
\centering
\resizebox{\linewidth}{!}{%
\begin{tikzpicture}[x=1cm,y=1cm]

\node[mainnode,font=\small\bfseries] (open) at (1.3,2.85) {Open position};
\node[mainnode] (rp) at (2.8,1.85) {Requirements\\profile $[RP]$};

\draw[flow] (open) -- (rp);

\node[mainnode] (pot) at (0.0,0.0) {Potential\\Candidates};
\node[mainnode] (d0) at (2.9,0.0) {Candidates\\$D_0$};

\node[mainnode] (sr) at (5.55,0.90) {$Score_R$};
\node[mainnode] (shm) at (5.55,-0.90) {$Score_{HM}$};

\node[mainnode] (d1) at (8.55,0.0) {Shortlisted\\Candidates\\$D_1$};
\node[mainnode] (sc) at (11.2,0.0) {$Score$};
\node[mainnode] (d2) at (13.8,0.0) {Validated\\Candidates\\$D_2$};
\node[mainnode] (d3) at (16.35,0.0) {Final\\Candidates\\$D_3$};

\draw[flow] (pot.east) -- (d0.west)
  node[midway,above=3pt,op]{Application / CV}
  node[midway,below=4pt,chan]{d1};

\draw[flow] (rp.south) -- ++(0,-0.55) -| ($(d0.north)+(0,0.02)$);

\coordinate (split) at (4.15,0.0);

\draw[flow] (d0.east) -- (split);

\draw[flow] (split) |- (sr.west)
  node[pos=.72,above=4pt,op]{Initial scoring}
  node[pos=.72,below=5pt,chan]{d2};

\draw[flow] (split) |- (shm.west)
  node[pos=.72,below=4pt,op]{Initial scoring}
  node[pos=.72,below=16pt,chan]{d3};

\coordinate (merge) at (6.55,0.0);

\draw[flow] (sr.east) -| (merge);
\draw[flow] (shm.east) -| (merge);

\draw[flow] (merge) -- (d1.west)
  node[pos=.42,above=5pt,op]{Selection meeting}
  node[pos=.42,below=5pt,chan]{d4};

\draw[flow] (d1.east) -- (sc.west)
  node[midway,above=5pt,op]{Interview}
  node[midway,below=5pt,chan]{d6};

\draw[flow] (sc.east) -- (d2.west)
  node[midway,above=5pt,op]{Ref. check}
  node[midway,below=5pt,chan]{d7};

\draw[flow] (d2.east) -- (d3.west)
  node[midway,above=5pt,op]{Final}
  node[midway,below=5pt,chan]{d8};

\end{tikzpicture}%
}
\caption{An abbreviated information-flow model of the Deployer recruitment process prior to AI tool integration. For readability, the figure omits some secondary details represented in Table~\ref{tab:deployer}.}
\label{fig:deployer_ifm}
\end{figure*}

The Deployer's recruitment process follows organizational guidelines emphasizing competence-based assessment and equal treatment. Several variants of the recruitment process was used internally for either specialized positions and high-volume temporary positions. The process here presented as illustration is the process for senior position recruitment, which involves multiple assessment stages and several decision-makers. The process is governed by a requirement profile --- a document specifying the competencies and criteria relevant to the position --- which is constructed before recruitment starts and serves as an static 
input to assessment and interview stages.

The recruitment process operates as follows, as shown as an information network in Figure \ref{fig:deployer_ifm} and in Table \ref{tab:deployer}.

Candidates apply by submitting a CV through a candidate tracking software (d1). Recruiters and Hiring Manager independently score candidates against the requirement profile (d2--d3). Scored candidates are selected for a shortlist in a collaborative meeting between Recruiters and Hiring Manager (d4). The shortlisted candidates undergo personality testing via an external provider (d5) and a competence-based interview conducted by the Recruiter (d6). Reference and credential checks follow (d7) before the Hiring Manager makes the final selection based on Recruiters recommendation (d8).

The process was found to be notably informal at several stages. Initial scoring (d2--d3) used a 4-grade scale without structured documentation requirements. The selection meeting (d4) synthesize these scores together with the assessors' broader impressions of candidates, and the grounds for selection are not formally recorded. This informality has consequences for the traceability of decisions and is a complicating factor when it comes to AI tool integration, points we return to in Section~\ref{sec:role-dep} and the discussion in Section~\ref{sec:discussion}.

\subsubsection{Roles and obligations pre-deployment}

The stakeholders in this process operate under a combination of legal requirements, organizational policies, and professional responsibilities including public employment law, discrimination law, and administrative procedure requirements. These obligations predate and are independent of AI involvement and mainly form the information needs we term legitimate questions. In this case study we do not attempt to disentangle the full legal landscape but rather take the professional responsibilities of the interviewed stakeholders as a proxy for the obligations those roles carry. The following is illustrative rather than exhaustive: it draws on the case study material to demonstrate how obligations generate information needs and thereby illustrate the paper's theoretical contribution, not to enumerate all such needs comprehensively (for this case or otherwise).

Several distinct stakeholder roles can be identified, each carrying different responsibilities. The \textit{Candidate} goes through the process with an obligation to represent themselves accurately, and rights to be assessed fairly, to know how they are evaluated, and to be able to contest a decision they believe was unfair. The \textit{Recruiter} carries practical responsibility for managing the process and assessing candidates competently and equally. The \textit{Hiring Manager} takes formal responsibility for the hiring decision and must ensure the process follows organizational policy and that decisions can be justified and audited. The \textit{HR Manager} oversees equal treatment compliance across recruitment processes and must ensure the process is reconstructible if complaints arise. The \textit{Appeals Assessor} --- any person or function that evaluates a challenge to a recruitment decision --- must be able to reconstruct the basis of the original decision and determine whether it was correctly reached.

The information needs that arise from these roles and responsibilities can be organized by what kind of access to the process they require which can be matched to the formalism from Section \ref{sec:3_transparency_answerability}. 
Some needs concern the structure of the process itself: how it is designed, what criteria apply, what steps are involved, and what decision logic governs it. 
Other needs concern what actually happened in a specific case: what information was used, what scores were given, what decision was reached, and whether relevant attributes such as demographic characteristics are traceable through the run. 
Yet other require records of past cases: logs or documentation that allow a specific decision to be reconstructed or a pattern of decisions to be examined. 
Finally, some needs require aggregated understanding of how the process behaves across many cases.

These four kinds of access -- the structure of the process, what happened in a specific case, records of past cases, and patterns across many cases -- correspond to information the process must preserve and make available for the obligations and rights described above to be fulfillable. And even if they are available, the structure of the decision-making system need to support the questions asked. The formal treatment of these information levels, and the conditions under which they are accessible to a given role, is developed in Sections~\ref{sec:structural}--\ref{sec:role-dep}.
These responsibilities and rights generate information needs of different kinds,
illustrated in Table~\ref{tab:roles}. Not all such information needs can be
satisfied directly by the role that holds the corresponding right or responsibility. A candidate's right to understand the basis of a decision, for example, may depend on information held by the Hiring Manager or Appeals Assessor, and may correspond to an obligation on those roles to provide or ensure that account. Role-dependent answerability therefore depends not only on which questions are legitimate for a stakeholder, but also on which parts of the process a role can access and for which parts it carries relevant
obligations. This distinction is developed formally in
Section~\ref{sec:role-dep}. Also specific forms of access to information can be seen as a kind of role, e.g answering a question through persistent logs.

\subsection{The Provider System}

\begin{figure*}[t]
\centering
\resizebox{\linewidth}{!}{%
\begin{tikzpicture}[x=1cm,y=1cm]

\node[mainnode,font=\small\bfseries] (open) at (0.0,2.0) {Open position};
\node[actor] at (0.6,1.50) {Recruiter};
\node[mainnode] (cfg) at (0.0,0.7) {[Configuration]};

\draw[flow] (open.south) -- (cfg.north);

\node[mainnode] (c0) at (0.0,0.0) {Candidates\\$C_0$};
\node[mainnode] (c1) at (2.9,0.0) {Filtered\\$C_1$};
\node[mainnode] (ans) at (6.0,0.0) {Answers};
\node[mainnode] (score) at (9.8,0.0) {Scored\\Template};
\node[mainnode] (rank) at (13.3,0.0) {Ranked\\Candidates\\$C_2$};

\draw[flow] (c0.east) -- (c1.west)
  node[midway,above=3pt,op]{Hard req.\ filter}
  node[midway,below=4pt,chan]{p1};

\coordinate (qsplit) at (4.4,0.0);

\draw[flow] (c1.east) -- (qsplit);

\draw[flow] (qsplit) |- ($(ans.west)+(0,0.85)$)
  node[pos=.72,above=2pt,op]{Simple questions}
  node[pos=.72,below=3pt,chan]{p2};

\draw[flow] (qsplit) -- (ans.west)
  node[midway,above=2pt,op]{Free text questions}
  node[midway,below=4pt,chan]{p3};

\draw[flow] (qsplit) |- ($(ans.west)+(0,-0.85)$)
  node[pos=.72,below=2pt,op]{Competence questions}
  node[pos=.72,below=13pt,chan]{p4};

\coordinate (ssplit) at (7,0.0);

\draw[flow] (ans.east) -- (ssplit);

\draw[flow] (ssplit) |- ($(score.west)+(0,1.25)$)
  node[pos=.74,above=2pt,op]{AI rules}
  node[pos=.74,below=3pt,chan]{p5};

\draw[flow] (ssplit) |- ($(score.west)+(0,0.42)$)
  node[pos=.74,above=2pt,op]{Free text analysis}
  node[pos=.74,below=3pt,chan]{p6};

\draw[flow] (ssplit) |- ($(score.west)+(0,-0.42)$)
  node[pos=.74,above=2pt,op]{Competence analysis}
  node[pos=.74,below=3pt,chan]{p7};

\draw[fallback] (ssplit) |- ($(score.west)+(0,-1.25)$)
  node[pos=.74,above=2pt,op]{Human fallback}
  node[pos=.74,below=3pt,chan]{p8};

\draw[flow] (score.east) -- ++(1.55,0)
  node[midway,above=3pt,op]{Aggregation}
  node[midway,below=4pt,chan]{p9};

\coordinate (agg) at ($(score.east)+(1.55,0)$);

\draw[flow] (agg) -- (rank.west)
  node[midway,above=3pt,op]{Ranking}
  node[midway,below=4pt,chan]{p10};

\draw[
  decorate,
  decoration={brace,mirror,amplitude=4pt},
  draw=green!45!black
] (2.6,-1.55) -- (6.7,-1.55)
  node[midway,below=8pt,grouplabel]{AI Chatbot};

\draw[
  decorate,
  decoration={brace,mirror,amplitude=4pt},
  draw=green!45!black
] (7.2,-1.55) -- (10.8,-1.55)
  node[midway,below=8pt,grouplabel]{AI Scoring};

\end{tikzpicture}%
}
\caption{An abbreviated version of the information flow model of the Provider's AI system for recruitment. Inputs are a Client recruitment request, captured as a requirement configuration, and a set of candidates. The output is a list of ranked and scored candidates.}
\label{fig:provider_ifm}
\end{figure*}

\begin{table*}[t]
\centering
\resizebox{0.95\textwidth}{!}{%
\begin{tabular}{cllll}
\toprule
\textbf{\#} & \textbf{Channel} & \textbf{Performed by} & \textbf{Operation} & \textbf{Stage} \\
\midrule
p1 & $[\mathit{Template}], \mathit{C_0} 
    \xrightarrow{\text{Hard req. filter}} \mathit{C_1}$ 
    & AI-Filter & Hard requirement filtering & Filtering \\
p2 & $[\mathit{TQ}], \mathit{Candidate} 
    \xrightarrow{\text{Simple questions}} \mathit{A_{rule}}$ 
    & Candidate & Rule-based questioning & Interview \\
p3 & $[\mathit{TQ}], \mathit{Candidate} 
    \xrightarrow{\text{Free text questions}} \mathit{A_{text}}$ 
    & Candidate & Open response collection & Interview \\
p4 & $[\mathit{TQ}], \mathit{Candidate} 
    \xrightarrow{\text{Competence questions}} \mathit{A_{comp}}$ 
    & Candidate & Competence-based questioning & Interview \\
p5 & $[\mathit{TQ}], \mathit{A_{rule}} 
    \xrightarrow{\text{AI rules}} \mathit{S_{rule}}$ 
    & AI-Scoring & Rule-based scoring & Scoring \\
p6 & $[\mathit{TQ}], \mathit{A_{text}} 
    \xrightarrow{\text{AI Free text}} \mathit{S_{text}}$ 
    & AI-NLP & Semantic text analysis & Scoring \\
p7 & $[\mathit{TQ}], \mathit{A_{comp}} 
    \xrightarrow{\text{AI Competence}} \mathit{S_{comp}}$ 
    & AI-NLP & Competence assessment & Scoring \\
p8 & $[\mathit{TQ}], \mathit{A_{comp}} 
    \xrightarrow{\text{Human fallback}} \mathit{S_{human}}$ 
    & Human Evaluator & Manual competence assessment & Scoring \\
p9 & $?\{\mathit{S_{rule}}, \mathit{S_{text}}, \mathit{S_{comp}} 
    \alt \mathit{S_{human}}\} 
    \xrightarrow{\text{Aggregation}} \mathit{ScoredTemplate}$ 
    & AI-Scoring & Score aggregation & Scoring \\
p10 & $\mathit{C_1} \times \mathit{ScoredTemplate} 
    \xrightarrow{\text{Ranking}} \mathit{C_2} \times \mathit{ScoredTemplate}$ 
    & AI-UI & Ranked candidate presentation & Presentation \\
\bottomrule
\end{tabular}
}
\caption{Provider AI system: channels, operations, and stages.}
\label{tab:provider}
\end{table*}

In this sub-section we describe the Provider's AI system, conceptually and as an information flow model in Table~\ref{tab:provider} and Figure~\ref{fig:provider_ifm}, following the same approach as for the Deployer system in the previous sub-section. 

The Provider system conducts AI-supported chat interviews with candidates and 
produces a ranked, scored list as output. The system is governed by an interview 
template --- a configurable structure jointly constructed by the Provider and 
Deployer --- which determines which questions are asked, which scoring models 
are activated, and how scores are weighted and aggregated. Each question in the 
template instantiates an independent scoring path through the system, drawing 
on a combination of rule-based algorithms and fine-tuned language models. The 
template is thus not merely a configuration parameter but the primary determinant 
of system behaviour at every stage.

The system operates at two levels of granularity. Filtering (p1) and ranking 
(p10) operate on the candidate pool. Interview and scoring (p2--p9) operate 
per candidate, producing a Scored Template for each candidate that captures 
their performance across all template questions independently. The final output 
is a ranked candidate list paired with Scored Templates, presented to the 
Deployer via an applicant tracking system.

One structural feature of the scoring stage warrants explicit note. The majority of candidates are assessed via automated scoring (p5--p7). A minority 
are routed to manual assessment by human evaluators (p8). These are mutually 
exclusive paths: a candidate assessed manually does not also receive automated 
scores. While the routing criterion is described in the system documentation, the output presented to the Deployer carries no indication of which path was taken for a given candidate. The recruiter and candidate both see the final total score but the recruiter in addition receive the full Scored Template, which contain scores per question.

\subsection{The Template as Shared Configuration}

Both systems are configured by a shared object --- the interview template --- 
produced in a preparatory flow that runs orthogonally to the candidate flow:

\begin{equation}
\mathit{Vacancy} \xrightarrow{\text{Req. profile}} \mathit{RP} 
\xrightarrow{\text{Template construction}} \mathit{Template}:[TQ_1, TQ_2, ..., TQ_n]
\end{equation}

The requirement profile $[\mathit{RP}]$ captures the Deployer's competency 
requirements for the position. The template $[\mathit{Template}]$ operationalises 
these requirements as interview questions and scoring criteria within the 
Provider's system. This shared object is the formal basis for the compositional 
relationship between the two systems, which we develop in 
Section~\ref{sec:compositional}.

\subsection{Analytical Scope}

The following three sections apply the framework to this case along the three dimensions of answerability. Section~\ref{sec:structural} examines what is answerable from the structure of each system in isolation. 

Section~\ref{sec:compositional} examines what further questions become answerable --- and what gaps become visible --- once the two systems are composed. 

Section~\ref{sec:role-dep} examines how answerability varies with stakeholder access, and states the full answerability (Proposition~\ref{prop:role-answerability}) and transparency relation (Definition~\ref{def:role-adequate-transparency}).

\section{Structural Answerability}
\label{sec:structural}
The Provider system $\mathbf{P}$ and Deployer process $\mathbf{D}$ introduced in Section~\ref{sec:use-case} are information networks in the sense of Definition~\ref{def:network}. This section examines what transparency questions 
are answerable from these networks directly. We call these \emph{structural questions}: questions whose answers are determined by the structure of $\mathbf{P}$ or $\mathbf{D}$ alone.

\subsection{What and How}

Since information networks model decision processes, their structure takes on the additional meaning of \emph{what} a system does and \emph{how} it does it. The most abstract account of what $\mathbf{P}$ does is given by $\mathit{In}(\mathbf{P}) \to \mathit{Out}(\mathbf{P})$:
\[
[\mathit{Template}], \mathit{C_0} \xrightarrow{\mathit{AI\ Assessment}} \mathit{C_2} \times \mathit{ScoredTemplate}
\]
\textbf{What?} $\mathbf{P}$ is an \emph{AI Assessment} from \emph{Candidates} to \emph{Scored Candidates} using an \emph{Interview Template}.

This answers the \emph{what} but says nothing about the \emph{how}. Expanding the channel \emph{AI Assessment} --- refining toward $\mathbf{P}^d$ --- answers progressively more specific how questions:
\[
[\mathit{Template}], \mathit{C_0} \xrightarrow{\text{Hard req. filter}} \mathit{C_1} \xrightarrow{\mathit{AI\ Interview}} \mathit{C_2} \times \mathit{Answers} \xrightarrow{\mathit{AI\ Evaluation}} \mathit{C_2} \times \mathit{ScoredTemplate}
\]
\textbf{How?} \emph{Candidates} are \emph{Filtered}, then \emph{Interviewed}, then \emph{Answers} are \emph{Scored}.

This \emph{what-by-how} pattern is general: any channel in an information network can be expanded into sub-channels that answer progressively more specific how questions, down to the level of granularity that the network supports. Each channel in Table~\ref{tab:provider} is an answer to a how question at its own level of granularity.

\subsection{Structural Questions over $\mathbf{P}$}
\label{subsec:structural-answerability}

The What/How reading of $\mathbf{P}$ established its overall structure. We now ask more specific questions of that structure. Each question below is expressed as an inquisitive network and answered by inspecting $\mathbf{P}$ directly.

$$\mathcal{Q}_{CV} : \textit{``Is the CV used in scoring?"} \implies [\mathit{CV} 
\to \textit{Scored T.}]? \implies \{L(\mathit{CV} 
\to \textit{Scored T.}\alt\neg L(\mathit{CV} 
\to \textit{Scored T.})\}$$

$\mathcal{Q}_{CV}$ asks if there is a channel from the CV to the score or not. The $join(\mathcal{Q}_{CV}) = \: !\mathcal{Q}_{CV} = \textit{Scored Template}$ and because the alternatives are disjoint the $meet(\mathcal{Q}_{CV}) = \bot$. We see that $\mathcal{P} \preceq \: !\mathcal{Q}_{CV}$ but \textbf{P} is only in one of the alternatives. Therefore the question is settled: A CV is not used for scoring. In fact the system $\mathbf{P}$ excludes CV's by design according to its developers.

\[
\mathcal{Q}_{auto} : \textit{``Is scoring automatic?"} \implies 
\{\mathit{Answers} \xrightarrow{x} \mathit{ScoredTemplate} \mid {?d(x)}\}
\]
\[
\mathcal{P} \cap \mathcal{Q}_{auto} =
\mathit{Answers} \xrightarrow{x} \mathit{ScoredTemplate} \mid d(x) = 
?\{\mathit{AI\ Rules} \alt \mathit{AI\ Free\ Text} \alt 
\mathit{AI\ Competence} \alt \mathit{Human\ Fallback}\}
\]

$\mathcal{Q}_{auto}$ asks for the description of the channel from 
\emph{Answers} to \emph{Scored Template}. Evaluating $\mathcal{Q}_{auto} 
\cap L(\mathbf{P})$: $\mathbf{P}$ contains four alternative descriptions 
of this channel, expressed at p9 as $?\{\mathit{S_{rule}}, \mathit{S_{text}}, 
\mathit{S_{comp}} \alt \mathit{S_{human}}\}$. The question is not settled 
to a single answer --- $\mathbf{P}$ spans multiple alternatives. The 
intersection reveals the full set of structural possibilities: scoring is 
one of four alternatives, three automatic and one manual. Which alternative 
applies to a specific candidate is not determined by $\mathbf{P}$ alone.

\[
p_6 = [\mathit{TQ}], \mathit{A_{text}} \xrightarrow{\mathit{AI\ Free\ Text}} \mathit{S_{text}}
\]
\[
\mathcal{Q}_{ft} : \textit{``How does free text scoring work?"} \implies L(p_6)/p_6 \cap \{\mathbf{P}|_{p_6}\}
\]

$\mathcal{Q}_{ft}$ asks whether $\mathbf{P}$ contains any internal structure for the free text scoring channel $p_6$ beyond its description as \emph{AI Free Text}. The projection $\mathbf{P}|_{p_6}$ extracts exactly the sub-network of $\mathbf{P}$ between $\mathit{A_{text}}$ and $\mathit{S_{text}}$. If $\mathbf{P}|_{p_6} = p_6$ --- no intermediate sites exist --- then $\mathbf{P}|_{p_6} \notin L(p_6)/p_6$ and the result is $\emptyset$. 
In the present case the free text scoring channel is not further defined in \textbf{P}. While this could be a lack of the information network \textbf{P}, it is worth noting that if $p_6$ is an opaque model not much more described internal structure might be available.

\subsection{Trace Questions}
\label{subsec:trace-questions}
Structural questions are answered from $\mathbf{P}$ alone. But there are many possible ways of realizing the structure $\mathbf{P}$. To ask questions about specific traces of the system we define an \textit{implementation} $\hat{\mathbf{P}}$.

\begin{definition}[Implementations and Traces]
\label{def:implementations}
An \emph{implementation} $\hat{\mathbf{N}}$ of a network $\mathbf{N}$ is a theoretical device with the following properties:
\begin{itemize}
    \item We assume that there exist an accessible actual accessible system $\omega \precsim \hat{\mathbf{N}}$ ($\omega$ is approximately described by $\hat{\mathbf{N}}$). 
    \item The representation $\hat{\mathbf{N}}$ defines all sites such that the description of each site $S \in \hat{\mathbf{N}}$ holds informative content $!d(S)$ which explains what information $S$ contains, and alternatives $?d(S)$ representing every possible value of $S$.
    \item $\hat{\mathbf{N}}$ is a device such that $\hat{\mathbf{N}} \preceq \mathbf{N}$, which given any settled input in $In(\hat{\mathbf{N}})$ produces settled values for all downstream sites, iterating through $\hat{\mathbf{N}}$ in topological order until all sites up to and including $Out(\hat{\mathbf{N}})$ are settled.
\end{itemize}

A \emph{trace} $\mathbf{t}$ of $\hat{\mathbf{N}}$ is a network where all sites have fully settled descriptions produced by $\hat{\mathbf{N}}$ for some settled input. A trace thus represent a record of a whole run through the decision process. The set of all traces of $\hat{\mathbf{N}}$ conditioned on an input set $I$ is:
\[
T(\hat{\mathbf{N}}, I) = \{ \mathbf{t} \mid \mathbf{t} \text{ is a trace of } \hat{\mathbf{N}} \text{ for some } \mathbf{t}_{in} \in I \}
\]
where $I$ is a set of settled inputs over $\mathit{In}(\mathbf{N})$. The unconditioned set $T(\hat{\mathbf{N}})$ is the special case where $I$ ranges over all possible inputs.
\end{definition}

Let $\hat{\mathbf{P}}$ be an implementation of $\mathbf{P}$ and $\hat{\mathbf{D}}$ be an implementation of $\mathbf{D}$. 

\[
\mathcal{Q}_{answers}^{\mathbf{t}} : \textit{``What were this candidate's answers?"} \implies \{s \in \mathbf{t} \mid s \in \{\mathit{A_{rule}}, \mathit{A_{text}}, \mathit{A_{comp}}\}\}
\]

\[
\mathcal{Q}_{answers} : \textit{``Candidate $c$ got score $s$, what were their answers?"} \implies c \xrightarrow{} ?\{\mathit{A_{rule}}, \mathit{A_{text}}, \mathit{A_{comp}}\} \xrightarrow{} s:\mathit{ScoredTemplate}
\]

$\mathcal{Q}_{answers}\cap \{\mathbf{t}\}$ asks for the settled values at all answer sites in trace $\mathbf{t}$. These sites are fully settled in any trace of $\hat{\mathbf{P}}$ --- every candidate who completes the interview produces settled values at each $\mathit{A_{rule}}, \mathit{A_{text}}, \mathit{A_{comp}}$. The question is answerable from $\mathbf{t}$ alone. Whether a trace of this kind is accessible to e.g the candidate or recruiter is dependent on:
\begin{itemize}
    \item If the structure of $P$, realized through $\hat{\mathbf{P}}$ supports the question - i.e is the information in the trace $\mathbf{t}$?
    \item Do the relevant stakeholders have access to the information in $\mathbf{t}$?
\end{itemize}
The second point is a matter of accessibility and is addressed in Section~\ref{sec:role-dep}, the first one is a matter of architecture and is either structurally allowed or not. We can note that from structure of $\textbf{P}$ all answers will be available in $\textbf{t}$ so there is no technical hindrance in this case. 

\[
\mathcal{Q}_{search}^{\mathbf t} :
\textit{``If I had answered differently, would I have scored higher?''}
\;\implies\;
?\{\, A_{comp} \mid \pi(A_{comp},\mathit{higher}) \in T(\hat{\mathbf P}) \,\}
\]

Assuming full access to $T(\hat{\mathbf P})$, counterfactual-style questions can be answered extensionally by lookup over settled traces. Since each $\mathbf t \in T(\hat{\mathbf P})$ is fully settled, this simply asks for the settled values of $A_{comp}$ occurring in traces whose downstream path reaches the refined outcome $\mathit{higher}$. 

Such full search is likely practically infeasible, but with full access to $\hat{\mathbf{P}}$ and its sub-channels a counterfactual can be answered by simply trying a different alternative on the partial downstream path.

\[
\mathcal{Q}_{cf}^{\mathbf t} :
\textit{``If I had answered $a_{cf}$ instead of $a$, what would I have scored?''}
\;\implies\;
?T(\hat{\mathbf P}|_{p7}, \{a_{cf}\})
\]

Questions of this kind are directly relevant to the right to meaningful explanation under Art.~86 of the AI Act, but the possibility of such counterfactual experimentation depends both on structure and operational access. That is, does that sub-structure exist and can I query or change an input at a particular point? This will be explored further in Section \ref{sec:role-dep}.

These structural and trace-level questions establish what \textbf{P} and \textbf{D} can each answer in isolation. The transparency questions that matter most --- how \textbf{P} affects decisions within \textbf{D}, what any given stakeholder can know of either --- require more than either system alone, and are the subject of the following section.

\section{Composition and Answerability}
\label{sec:compositional}

The Provider system $\mathbf{P}$ and Deployer process $\mathbf{D}$ have been analyzed independently. But the considerations that matter most to candidates, recruiters, and regulators are not questions about either system 
alone --- they are questions about what happens when $\mathbf{P}$ operates within $\mathbf{D}$. A candidate who asks whether the recruitment process treated them fairly cannot be answered from $\mathbf{P}$ or $\mathbf{D}$ 
alone. These kinds of questions requires a composed system.

\subsection{Composition}

In line with the use case, we will think of composition as the problem of situating an existing tool described as $\textbf{P}$ within a socio-technical environment $\textbf{D}$. For any practical purpose this is more involved than putting one graph into the other. Parts of the original decisionmaking structure must be changed or should be changed to accommodate the addition in a proper way.

Let $\textbf{D}_0$ be the subset of $\textbf{D}$ which is left unchanged. Since it would be possible to put changed parts back just as they were $\textbf{D} \preceq \textbf{D}_0$. From this kept subset the decision system of $\textbf{D}_0$ is refined into $\textbf{D}_1$ in such a way that it accommodates the addition of $\textbf{P}$. This can be done in many ways but $join(\textbf{D}_1, \textbf{D}) = \textbf{D}_0$ and there must be at least one $meet(\textbf{D}_1, \textbf{P})$.
From here one or more channels of $D_1$ is refined in such a way that it includes the entirety of $\textbf{P}$ producing the composition $\textbf{D}_P$ where $\textbf{P} \subset \textbf{D}_P$. Assuming $\textbf{D} \cap \textbf{P} = \varnothing$ then $join(\textbf{D}_P, D) = \textbf{D}_0$. 
We will assume that compositions in general have these compositional stages $\textbf{C}^D_P := \langle D_0, D_1, D_P \rangle$.

\begin{definition}[Composition]
\label{def:composition}
Let $\mathbf{P}$ and $\mathbf{D}$ be information networks with 
$\mathbf{S}_\mathbf{P} \cap \mathbf{S}_\mathbf{D} = \emptyset$. A 
\emph{composition} of $\mathbf{P}$ into $\mathbf{D}$ is a triple 
$\textbf{C}^\mathbf{D}_\mathbf{P} := \langle \mathbf{D}_0, \mathbf{D}_1, 
\mathbf{D}_P \rangle$ where:
\begin{itemize}
    \item $\mathbf{D}_0$ is the \emph{preserved} sub-network of $\mathbf{D}$ 
    --- the part left unchanged by the integration.
    \item $\mathbf{D}_1$ is the \emph{adapted} network, accommodating $\mathbf{P}$. 
    There must exist at least one $\text{meet}(\mathbf{D}_1, \mathbf{P})$. 
    \item $\mathbf{D}_P$ is the \emph{composed} network including $\textbf{P}$.
\end{itemize}
Here: 
\[
\mathbf{D} \preceq \mathbf{D}_0, \quad 
\mathbf{D}_1 \preceq \mathbf{D}_0, \quad 
\mathbf{D}_P \preceq \mathbf{D}_1
\]
and $\mathbf{P} \subset \mathbf{D}_P$, $\text{join}(\mathbf{D}_1, \mathbf{D}) 
= \mathbf{D}_0$, and $\text{join}(\mathbf{D}_P, \mathbf{D}) = \mathbf{D}_0$.
\end{definition}

\subsection{Integration Scenarios}

We consider three instantiations of $\textbf{C}^\mathbf{D}_\mathbf{P}$ for the case 
study, differing in how much of $\mathbf{D}$ is preserved and how deeply 
$\mathbf{P}$ is integrated into the Deployer's process. These are summarised 
in Table~\ref{tab:scenarios}.

\begin{table*}[t]
\centering
\resizebox{0.95\textwidth}{!}{%
\begin{tabular}{lllll}
\toprule
\textbf{Scenario} & \textbf{$\mathbf{D}_0$ removes} & \textbf{$\mathbf{D}_1$ adaptation} & \textbf{$\mathbf{P}$ replaces} & \textbf{Human evaluation} \\
\midrule
$\mathbf{Min}^\mathbf{D}_\mathbf{P}$  & d2, d3 & $\mathit{C_1}$ feeds d2, d3 & d1 & All candidates passing hard req. \\
\midrule
$\mathbf{Alt}^\mathbf{D}_\mathbf{P}$  & d2, d3, d4 & Selection meeting receives & d1, d2, d3 & All candidates, \\
  &            & $\mathit{C_2} \times \mathit{ScoredTemplate}$ & & AI-ranked input \\
  &            & instead of $\mathit{Score_R}, \mathit{Score_{HM}}$ & & \\
\midrule
$\mathbf{Max}^\mathbf{D}_\mathbf{P}$  & d2, d3, d4 & Threshold channel added, & d1, d2, d3 & $\mathit{top}_x$ only \\
  &            & producing $\mathit{top}_x$, $\mathit{rest}$; & & proceed to d5, d6 \\
  &            & d5, d6 receive $\mathit{top}_x$ only & & \\
\bottomrule
\end{tabular}
}
\caption{Three integration scenarios as instantiations of $C^\mathbf{D}_\mathbf{P}$.}
\label{tab:scenarios}
\end{table*}

\paragraph{$\mathbf{Min}^\mathbf{D}_\mathbf{P}$ --- Minimal integration.}
$\mathbf{P}$ replaces only the application and hard requirement filtering 
channel d1. The preserved sub-network $\mathbf{D}_0$ retains all human 
assessment channels --- initial scoring, selection meeting, personality 
testing, competence interview, and final selection. The adaptation 
$\mathbf{D}_1$ connects the filtered candidate pool $\mathit{C_1}$ from 
$\mathbf{P}$ into the downstream channels of $\mathbf{D}$. All candidates 
passing hard requirements proceed to full human assessment. The AI system 
acts as a structured intake mechanism, not an assessor.

\paragraph{$\mathbf{Alt}^\mathbf{D}_\mathbf{P}$ --- Assessment integration.}
$\mathbf{P}$ replaces the application, hard requirement filtering, and the 
initial independent scoring channels d1--d3. The selection meeting d4 is 
retained but adapted: it now receives $\mathit{C_2} \times 
\mathit{ScoredTemplate}$ from $\mathbf{P}$ in place of the recruiter and 
hiring manager's independent 4-grade scores. The adaptation $\mathbf{D}_1$ 
restructures the selection meeting's inputs accordingly. All candidates 
proceed to the selection meeting, but the information available there has 
changed structurally --- the independent human pre-assessment is absent, 
replaced by the AI-generated ranking and scored template.

\paragraph{$\mathbf{Max}^\mathbf{D}_\mathbf{P}$ --- Full integration.}
$\mathbf{P}$ replaces d1--d3 as in $\mathit{Alt}^\mathbf{D}_\mathbf{P}$, 
but the selection meeting is also removed from $\mathbf{D}_0$. The 
adaptation $\mathbf{D}_1$ introduces a threshold channel that splits 
$\mathit{C_2} \times \mathit{ScoredTemplate}$ into two sites: 
$\mathit{top}_x$, who proceed to personality testing d5 and competence 
interview d6, and $\mathit{rest}$, who are directly excluded. 

\subsection{Decision Attribution} \label{sec:composition_attribution}

A natural way of evaluating compositions between AI and human decision systems $C^\mathbf{D}_\mathbf{P}$ 
is: where are decisions taken and who actually makes them? The answer is not simply a matter of policy or intent --- it can be analyzed as a structural property of $\mathbf{D}_P$, readable from the upstream paths of outcome sites.

Like $\textbf{D}$, $\mathbf{D}_P$ is a selection process starting with the full candidate pool 
$\mathit{C_0}$ and progressively excluding candidates at each filtering 
channel, with only the hired candidates remaining at $\mathit{D_3}$. Each 
filtering channel partitions its input into two cohorts: those who proceed 
($\mathit{top}$) and those who are excluded ($\mathit{rest}$). This 
partitioning maps $\mathbf{D}_P$ naturally onto the structure of a Sankey diagram (where the width of each flow represents the proportion of candidates at each stage).

The primary filtering points across the three scenarios are:
\begin{itemize}
    \item \textbf{Hard requirements} --- binary exclusion on mandatory criteria
    \item \textbf{Ranking} --- ordering and threshold on AI assessment scores
    \item \textbf{Selection meeting} --- collaborative human shortlisting
    \item \textbf{Post-interview} --- competence and personality assessment
    \item \textbf{Vetting} --- reference and credential checks
    \item \textbf{Final hiring decision} --- hiring manager selection
\end{itemize}

On each of these a smaller subset of candidates actually participates, so the \textit{rest} fraction is important for understanding the properties of the decision process.

These points of decision are not unrelated and participate differently in the different scenarios. Depending on their positioning the information on which they act are different.

\paragraph{$\mathbf{Min}^\mathbf{D}_\mathbf{P}$.}
Here the full filter chain from $\mathbf{D}$ remains in $\uparrow D_3$. To contrast with the other scenarios, we will review what the main filtering steps are basing their decisions on:
\begin{itemize}
    \item \textbf{Selection meeting:} Here hiring manager and recruiters have independently scored CVs on a rough 1-4 scale, but more importantly they were the ones doing the scoring and carry their opinions of the candidates with them into the meeting. The 1-4 scale is rough by design as it is not the main information carrier. 

    \item \textbf{Post-interview filtering:} The post-interview filtering is based on competence scores, the personality test scores and by the human judgment on the candidates.

    \item \textbf{Final hiring decision:} The hiring manager selects from a small shortlist of candidates who have passed the full process. The decision is a human judgment but operates the small subset of remaining candidates. The hiring decisions are made on recommendation from the hiring process.
\end{itemize}

\paragraph{$\mathbf{Alt}^\mathbf{D}_\mathbf{P}$.}
In $\mathit{Alt}^\mathbf{D}_\mathbf{P}$ the selection meeting remains but its information base changes fundamentally. The independent pre-scoring by recruiters and hiring manager is replaced by $\mathit{C_2} \times 
\mathit{ScoredTemplate}$ from $\mathbf{P}$. The key differences from $\mathit{Min}^\mathbf{D}_\mathbf{P}$ are:

\begin{itemize}
    \item \textbf{Selection meeting:} The meeting now operates on an     AI-generated ranking rather than independent human assessments. Participants have not scored the candidates themselves and do not carry 
    independent opinions into the meeting. The natural anchor for discussion is the AI ranking, which structurally favours automation bias --- accepting the ranking requires no effort, overriding it requires     
    consulting the full $\mathit{ScoredTemplate}$ for each candidate. 
    The $\mathit{rest}$ cohort at this stage is therefore more likely to 
    reflect AI judgment than in $\mathit{Min}^\mathbf{D}_\mathbf{P}$.

    \item \textbf{Post-interview and final decision:} These remain 
    structurally identical to $\mathit{Min}^\mathbf{D}_\mathbf{P}$, but 
    operate on a population pre-shaped by the AI ranking. The human 
    judgment at these stages is real but conditioned on who the AI 
    selected for consideration.
\end{itemize}

\paragraph{$\mathbf{Max}^\mathbf{D}_\mathbf{P}$.}
In $\mathit{Max}^\mathbf{D}_\mathbf{P}$ the selection meeting is removed 
entirely. The ranking channel produces $\mathit{top}_x$ and $\mathit{rest}$ 
directly. The key structural consequences are:

\begin{itemize}
    \item \textbf{Ranking as primary filter:} The $\mathit{rest}$ cohort 
    at the ranking stage has $\uparrow_{\mathbf{D}_P}(\mathit{rest})$ 
    containing only $\mathbf{P}$ channels --- no human channel reaches 
    them. These candidates are excluded on AI judgment alone, unseen by 
    any human assessor.

    \item \textbf{Post-interview and final decision:} These operate only 
    on $\mathit{top}_x$ --- a population entirely determined by 
    $\mathbf{P}$. Human judgment is genuine within this population but 
    cannot reach or reconsider candidates in $\mathit{rest}$. The 
    effective scope of human decision-making is structurally bounded by 
    the AI ranking threshold.
\end{itemize}

The structural differences across the three scenarios are summarized by 
comparing $\uparrow_{\mathbf{D}_P}(\mathit{rest})$ at the ranking stage: 
in $\mathit{Min}^\mathbf{D}_\mathbf{P}$ there is no AI-driven $\mathit{rest}$ 
cohort; in $\mathit{Alt}^\mathbf{D}_\mathbf{P}$ the $\mathit{rest}$ cohort 
at the selection meeting is AI-influenced but human-mediated; in 
$\mathit{Max}^\mathbf{D}_\mathbf{P}$ the $\mathit{rest}$ cohort up to the 
interview stage is entirely AI-determined. 
This decision gradient is a structural fact about the shape of $\mathbf{D}_P$.

\subsection{Effective Input, Impact and the Framing Trap}

A recurring failure in AI assessment is presuming that properties established within a component's scope hold beyond it~\cite{selbst2019fairness}. A fairness statistic computed over the outputs of a scoring model says little about whether that model's inputs carry protected-attribute proxies, nor whether its outputs produce discriminatory effects downstream. The same pattern arises in any domain where a subsystem is evaluated in isolation from its deployment context. To avoid this trap the AI-model frame needs to be situated in its deployment context, something that require at least answering (1) where the model is within the larger scope, (2) how the models output affect downstream consequences, (3) how the models inputs are obtained and (4) where these inputs originate from, and relate to society.

These conditions are essentially transparency questions. Let $c$ denote the channel in $\mathbf{D}_P$ corresponding to the AI model component of interest. Then:
\begin{enumerate}
\item \emph{Location}: $c \in \mathbf{C}_{D_P}$ — the component is a
    locatable channel in the composed network.
\item \emph{Output path}: $\pi(c, \mathit{Out}(\mathbf{D}_P)) \neq \emptyset$
    — there is a traceable path from the component's outputs to consequential
    outcome sites, such as the $[top_x, rest]$ partitions from subsection \ref{sec:composition_attribution}.
\item \emph{Input path}: $\pi(\mathit{In}(\mathbf{D}_P), c) \neq \emptyset$
    — there is a traceable path from the composed network's inputs to the
    component's inputs, capturing how the information the component receives
    comes to take the values it does.
\item \emph{Input population and static datasets}: the input sites $\mathit{In}(c)$ carry descriptions
    $d(s)$ whose alternatives $\mathbf{?}d(s)$ include candidate demographic
    or population characteristics. More generally, the sites in $In(\textbf{D}_P)$ are per definition populated from outside the scope of $\textbf{D}_P$ and may thus hide sampling bias. 
\end{enumerate}

In $\mathbf{D}_P$, conditions (1)--(3) are answerable for the AI scoring channels $p5$--$p7$ across all three integration scenarios. These channels are situated within $\mathbf{P} \subset \mathbf{D}_P$, so condition~(1) is met
structurally. It is not a given that it would be met -- if $\textbf{P}$ had not been described to this level $\textbf{P}$ could have been mistaken for the `AI model', in which case the wrong conclusions would have been reached. It is possible that channel $p6$ for instance can be further refined to reveal more structure - but the deeper into the system the channel is the more constrained it is. 

The input path is the upstream feeding into the models. In all integration scenarios the candidates are sourced the same way through a third party ATS fed by public job adverts, and while this is information which partially answers (4) this information is not in $D_P$. It is a general fact that any model has some frame and information outside it must be obtained through external context. 

Once in the system the candidates feeds the interview channels $p2-p4$, where candidate answers are produced under Template-governed questioning, satisfying condition~(3) and revealing that
the inputs to the scoring channels are themselves shaped by prior AI-candidate interaction. If these interactions are language dependent or otherwise non-inclusive, then the answers themselves -- even if they contain just anonymous text -- become demographic proxies. In this real use case, this was partly handled by a language setting.
The input path is the same in all three integration scenarios.

The output path, in contrast, is different in each scenario and thus the potential for effects on stakeholder outcomes differs from scenario to scenario -- despite models and indeed $\textbf{P}$ is identical. This highlights the importance of framing and importance of making sure stakeholder outcomes are in the modeled scope.

\subsection{Decision Semantics}

The meaning of a decision is not carried by its value alone but by the path that produced it. Two candidates with identical values at $\mathit{ScoredTemplate}$ may have arrived there through structurally different paths --- one assessed by AI scoring, one by human evaluation. These are not the same outcome despite identical surface values. The semantic content of a settled site $b$ in $\mathbf{D}_P$ is determined by $\uparrow_{\mathbf{D}_P}(b)$ --- the full upstream path that produced it. This has significant effects on how AI output should be understood, as the path through which it was reached must be taken into account.

A recruiter operating in $\mathit{Alt}^\mathbf{D}_\mathbf{P}$ or $\mathit{Max}^\mathbf{D}_\mathbf{P}$ who receives $\mathit{ScoredTemplate}$ values may reasonably assume they are looking at AI-generated scores generated through the specified path in $\mathbf{P}$. For the majority of candidates this is correct. But for candidates whose answers were routed to human evaluation --- the $?\{\mathit{S_{auto}} ; \mathit{S_{human}}\}$ alternative in $\mathbf{P}$ --- the upstream of $\mathit{ScoredTemplate}$ is structurally different. For instance, in human scoring, answers to earlier questions matter for later questions; in AI scoring they are independent. The recruiter's mental model of what they are examining is thus incorrect for this subset of outputs, and the actual system provides no signal to correct it.

The cases above concern misinterpretation of a correctly modeled alternative path. A related but distinct problem arises when $\omega$ itself is not properly described by $\hat{\mathbf{P}}$ --- when the actual system follows a path that the model does not sanction at all.

More generally, path-dependent meaning extends to all outcome sites: how we interpret that a candidate was rejected, shortlisted, or hired depends on which paths produced those outcomes. A special case arises when $\omega$ is not properly described by $\hat{\mathbf{P}}$. Consider if there is a bias or other systematic error introducing a hidden alternative path not modeled by a channel $A \to B$ in $\hat{\mathbf{P}}$. If so, the apparent downstream site $B$ is misinterpreted as having upstream $\uparrow B$ when values were in fact derived through some other path.

Consider
$$[\mathit{RP}], \mathit{D_0} \times \mathit{Score_R}, \mathit{Score_{HM}} \xrightarrow{\text{Selection meeting}} \mathit{D_1}[\mathit{top}_x, \mathit{rest}]$$
from the use case. Suppose the hiring manager lets his friends through more easily. Then such a candidate is misinterpreted as having passed the \textit{Selection meeting} when they in fact passed through a different channel $D_0 \xrightarrow[]{\text{Friend}} D_1[\mathit{top}_x]$. 
This observation admits a substantial generalization. Any systematic deviation of $\omega$ from $\hat{\mathbf{P}}$ --- bias, error, or unmodeled behavior --- is in principle expressible within the same formalism as an alternative sub-network competing with the modeled one. We state this as a conjecture:

\begin{conjecture}[Flow Substitution]
Let $\mathbf{P}$ be a well-formed information network and $\hat{\mathbf{P}}$ an implementation with $\omega \precsim \hat{\mathbf{P}} \precsim \mathbf{P}$. 
Suppose that for some inputs, $\omega$ produces traces $\mathbf{t} \notin T(\hat{\mathbf{P}})$ --- that is, the actual system deviates from the model. 
Then there exists a sub-network $\mathbf{N} \subseteq \mathbf{P}$ and an alternative network $\mathbf{N}_e$ such that $\mathit{In}(\mathbf{N}_e) = \mathit{In}(\mathbf{N})$, $\mathit{Out}(\mathbf{N}_e) = \mathit{Out}(\mathbf{N})$, and $\mathit{In}(\mathbf{N}_e) \subseteq \uparrow_{\mathbf{P}}(\mathit{Out}(\mathbf{N}))$ --- an alternative path draws on sites upstream in $\mathbf{P}$ to produce $Out(N)$ on a different path. 

The resulting $\mathcal{P}' = \mathbf{P}[?\{\mathbf{N} \alt \mathbf{N}_e\}/\mathbf{N}]$ is not a refinement of $\{\mathbf{P}\}$ but is a closer approximation of $\omega$.
\end{conjecture}

A proof is largely immediate: if the behaviour of $\omega$ under the deviation is modellable as a network at all, then the alternative $?\{\mathbf{N} ; \mathbf{N}_e\}$ captures it by construction. What is non-trivial, and what makes the conjecture substantive, is what $\mathbf{N}_e$ must look like to produce the observed deviation --- which sites it must draw on, which channels it must substitute for, and what those structural requirements reveal about the mechanism of the deviation itself. The conjecture thus frames an analytical move rather than a missing theorem: any systematic deviation of a system from its model is expressible as a structural alternative, and constructing that alternative is itself a diagnostic exercise.

\section{Role-Dependent Answerability}
\label{sec:role-dep}

The answerability results of Sections~5 and~6 treat $\mathbf{D}_P$ as a single shared object. In practice, different roles engage with different parts of the system under different conditions of access. A Candidate, a Recruiter, a Provider, and a regulatory auditor each bring distinct questions to the system and each has access to a different fragment of it. This section formalizes that variation.

\begin{definition}[Role]
\label{def:role}
A \emph{role} is a triple $R = \langle Q_R, S_R, V_R \rangle$ where:
\begin{itemize}
    \item $Q_R$ is a set of transparency questions role $R$ legitimately needs answered;
    \item $S_R \subseteq \mathbf{D}_P$ is a partially described sub-network of $\mathbf{D}_P$, representing the structural knowledge of $R$ --- what $R$ knows of the system through documentation, training, or communicated system descriptions;
    \item $V_R \subseteq \hat{\mathbf{D}}_P$ is a partially described sub-network of an implementation $\hat{\mathbf{D}}_P$, representing the operational access of $R$ --- the sites whose settled values are available to $R$ through logs, interfaces, or traces of the running system.
\end{itemize}
\end{definition}

\begin{definition}[Structural and Operational Question Classes]
\label{def:answerability-classes}
Let $R = \langle Q_R, S_R, V_R \rangle$ be a role. We write
$Q_R^S \subseteq Q_R$ for the set of structural questions of $R$, i.e.\ questions about the structure of the system such as paths, alternatives, dependencies, and channel existence, and
$Q_R^V \subseteq Q_R$ for the set of operational questions of $R$, i.e.\ questions about settled values in actual traces or runs or statistics thereof.
\end{definition}

The distinction between $S_R$ and $V_R$ is fundamental. $S_R$ captures what $R$ can reason about structurally --- the sites and channels $R$ knows to exist and can interpret. $V_R$ captures what $R$ can actually read in a running system. The difference is that between a manual in a drawer and a wall of live gauges: the manual is necessary to interpret the gauges, but the gauges cannot be replaced by the manual. Structural questions --- about paths, alternatives, and what the system is --- are in principle answerable from $S_R$ alone. Questions about specific traces --- what value did this site take, was this candidate scored by AI or human --- require $V_R$.

\begin{definition}[Maximal Descriptive Network]
Let $\mathbf{N}$ be a partially described information network. The \emph{maximal descriptive network} of $\mathbf{N}$, written $mdn(\mathbf{N})$, is the unique $\preceq$-maximal fully described network derivable from $\mathbf{N}$.
\end{definition}

\begin{lemma}[Collapse under ambiguous mediation]
\label{lem:ambiguous-collapse}
Let $\mathbf{N}$ be a partially described network, and suppose a dependency from $A$ to $B$ is mediated through both a described site $X$ and at least one undescribed path. Then $X$ cannot be retained in $mdn(\mathbf{N})$ as an intermediate on the path from $A$ to $B$.
Hence the ambiguous fragment collapses in $mdn(\mathbf{N})$ to the weaker fully described dependency
\[
A \to B, A \to X.
\]
Here $X$, if it is kept at all, is reduced to a side effect.
\end{lemma}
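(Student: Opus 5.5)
The plan is to argue directly from the refinement order of Definitions~\ref{def:refinement} and~\ref{def:described-refinement}, together with the defining property of $mdn(\mathbf{N})$: it is the unique $\preceq$-maximal network that is fully described and derivable from $\mathbf{N}$. Maximal here means least committed. It must be a coarsening compatible with every fully described completion of $\mathbf{N}$, and it may assert only structure that all of them share. The fragment in question consists of a dependency $A \to B$ that is realised in part through the described site $X$ (so $A \to X \to B$ appears) and in part through some undescribed path $A \to U \to B$. Here $U$ carries the empty description $\epsilon$, or the path is left unspecified.

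\textbf{Step 1 (consider all completions).} I will consider the fully described completions of the undescribed path. By \textsc{Desc. Refine}, $\epsilon \preceq d$ for every $d$, so the undescribed mediation can be completed in two kinds of ways. In one kind it is realised through $X$ itself, via \textsc{Site Split-D} or by identifying it with the $X$-route. In the other kind it is realised through a fresh site disjoint from $X$. An instance of the second kind is a channel obtained by \textsc{Ch. Split} from $A \to B$ directly, not passing through $X$. Both kinds are legitimate fully described refinements of $\mathbf{N}$.

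\textbf{Step 2 (show $X$ cannot be kept as a mediator).} Suppose $mdn(\mathbf{N})$ kept $X$ as an intermediate on every path from $A$ to $B$, that is, $X \in \pi(A,B)$ with the path factoring through $X$. Then every refinement of $mdn(\mathbf{N})$ would preserve that factorisation. This is because the rules only add sites and channels or sharpen them (\textsc{Ch. Sharp.} can disconnect, but only by splitting sub-sites). A completion of the second kind has a route $A \to B$ that avoids $X$, so it would not lie in $L(mdn(\mathbf{N}))$. That contradicts $mdn(\mathbf{N})$ being an upper bound of all fully described completions, via Proposition~\ref{prop:refinement-order}: $N' \preceq mdn \iff N' \in L(mdn)$. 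Hence $X$ is not retained as a mediator.

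\textbf{Step 3 (identify the collapsed form).} What all completions do share is the dependency $A \to B$ and the channel $A \to X$, since $X$ is described and lies downstream of $A$. I will check that $A \to B,\ A \to X$ refines to each completion. From $A \to B,\ A \to X$, \textsc{Ch. Split} inserts the intermediate on $A \to B$, and, where needed, \textsc{Ch. Sharp. (distribute)} or identification routes part of the flow through $X$. So this network is an upper bound. I will then show it is the least such bound: any strictly coarser network either loses $A \to B$ or loses the description of $X$, so it is not the maximal fully described network consistent with $\mathbf{N}$ while still preserving its described content. Under \textsc{Out Sharp.}, $X$ then appears only as an output side branch $A \to X$, which is a side effect.

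\textbf{Main obstacle.} I expect Step 3 to be the hardest, specifically showing that the completion routing through $X$ is reachable from the collapsed form $A \to B,\ A \to X$. The rules never merge an existing site into a path; \textsc{Ch. Split} introduces only fresh sites. My first attempt will be to treat the $X$-mediated completion as obtained by \textsc{Site Split} of $B$'s input context together with \textsc{Ch. Intro} applied to incomparable sub-sites. If that fails, I will weaken the claim to the form the lemma actually needs. That form is that $mdn(\mathbf{N})$ contains no path asserting $X \in \pi(A,B)$, and that $A \to B,\ A \to X$ is the strongest fully described fragment entailed by all completions.
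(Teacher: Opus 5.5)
Your proof has a genuine gap, and the obstacle you flag for Step 3 is where it shows. You say yourself in Step 1 that the completions are fully described refinements of $\mathbf{N}$. But no refinement rule deletes sites or channels (\textsc{Ch.\ Sharp.} only detaches part of a split site), so every completion still has $X$, or a sub-site of it, feeding $B$. Three consequences follow:
\begin{itemize}
\item The structure common to all completions includes $X \to B$.
\item $A \to B,\ A \to X$ is not an upper bound of the completions, since no rule attaches the pre-existing site $X$ to $B$.
\item Your fallback, ``the strongest fully described fragment entailed by all completions'', would contain $X \to B$. It would therefore argue for retaining $X$ as a mediator, which is the opposite of the lemma.
\end{itemize}
Step 2 does not close this. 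It refutes only the hypothesis that every $A$-to-$B$ path factors through $X$. That hypothesis is already false in $\mathbf{N}$, because the undescribed path bypasses $X$. Step 2 leaves untouched a candidate such as $A \to X \to B,\ A \to B$, which is exactly what the lemma must exclude. Also, $X \in \pi(A,B)$ only says $A \geq X \geq B$. It is not a factorisation condition, and it holds in $\mathbf{N}$ and in every refinement of it. Finally, you read $mdn$ as a common coarsening of the completions. Under that reading, ``$\preceq$-maximal'' selects the coarsest, degenerate one. Step 3 quietly switches to a least common coarsening, whose existence you never establish.

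The missing idea is that $mdn(\mathbf{N})$ is not obtained by quantifying over ways of filling in the undescribed path. Completing $\epsilon$ to an arbitrary $d$ is precisely the transfer of undescribed structure onto described structure that $mdn$ must not perform. The paper's argument is derivational and rests on a side condition:
\begin{itemize}
\item Keeping $X$ as an intermediate would mean deriving the described refinement $A \to X \to B$ of the dependency $A \to B$.
\item That derivation requires a descriptive decomposition by \textsc{Site Split-D}.
\item Its condition $\lesssim$ demands that all component descriptions be present and jointly decompose the parent.
\end{itemize}
With one mediating path undescribed, that condition fails, so the refinement through $X$ is not licensed. What remains derivable is $A \to B$ with $X$ attached only as a side branch $A \to X$, which \textsc{Out Sharp.} produces from $A \to B$. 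Rebuild Steps 2--3 around this side condition rather than around completions.
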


\begin{proof}
Retaining $X$ as an intermediate would resolve the mixed $A \to B$ dependency into the stricter described refinement $A \to X \to B$. But this is not derivable: by \textsc{Site Split-D}, a descriptive decomposition is allowed only when all component descriptions are present. Since one alternative path is undescribed, the ambiguity cannot be resolved in a fully described refinement. $mdn(\mathbf{N})$ may not transfer the undescribed internal structure onto remaining described structure and thereby grant it a stronger role than warranted. Therefore the strongest fully described dependency that remains is $A \to B$ with $A \to X$ as an non-causal side-effect.
\end{proof}
\begin{example}
Suppose a role has access to the partially described fragment
\[
A \to X_1 \to B,\ \dots,\ A \to X_6 \to B,\ A \to S \to B,\ B \to C,
\]
where $X_1,\dots,X_6,B,C$ are described but $S$ is not. Then the contribution of each $X_i$ to $B$ is ambiguous, since the undescribed path through $S$ may mediate or override the production of $B$. The conservative collapse is therefore
\[
A \to X_1,\ \dots,\ A \to X_6,\ A \to B \to C,
\]
rather than a network in which the $X_i$ remain answerability-relevant intermediates for $B$.
\end{example}

This matters directly for role-relative access. A role may observe a scattered set of described sites within a larger partially hidden process and still be unable to answer questions about how downstream outcomes are produced. The reason is that once the accessible fragment is collapsed to $mdn(S_R)$ or $mdn(V_R)$, ambiguously mediated interior structure may disappear. Role-relative access is therefore conservative with regards to answerability: visible fragments do not by themselves license structural conclusions about the channels they appear to participate in.

Proposition~\ref{prop:role-answerability} and Definition~\ref{def:role-adequate-transparency} capture two complementary aspects of this analysis. Role-dependent answerability is a property of the system and access structure. For a given role $R$ and question $q$, it identifies whether the available access suffices to settle $q$. Role-adequate transparency is a property of the role's epistemic position. It evaluates whether the system places $R$ in an adequate position across the full information demand $Q_R$.

\begin{proposition}[Role-Dependent Answerability]
\label{prop:role-answerability}
Let $R = \langle Q_R, S_R, V_R \rangle$ be a role.
\begin{enumerate}
    \item A question $q \in Q_R^S$ is \emph{structurally answerable} for $R$ iff
    \[
    \mathit{mdn}(S_R) \subseteq \mathcal{L}(q).
    \]
    \item A question $q \in Q_R^V$ is \emph{operationally answerable} for $R$ iff
    \[
    \mathit{mdn}(V_R) \subseteq \mathcal{L}(q).
    \]
\end{enumerate}
Thus structural answerability is determined by the role's accessible described sub-network, while operational answerability is determined by the role's accessible described sub-implementation.
\end{proposition}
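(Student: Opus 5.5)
The plan is to reduce both clauses to the full-settling clause of Proposition~\ref{prop:answerability}, after fixing what information state a role actually possesses. \textbf{Step 1} identifies the role's information state. Its structural knowledge $S_R$ is partially described, so it is not a network in $\mathbb{IF}$ but a family of possibilities. I would take the system proposition representing $R$'s structural position to be $\mathcal{P}_R^S := \mathcal{L}(\mathit{mdn}(S_R))$, the downward closure of $L(\mathit{mdn}(S_R))$. The justification is as follows. Any answer $R$ may legitimately assert must be derivable by the refinement rules of Definition~\ref{def:described-refinement} from fully described content. By Lemma~\ref{lem:ambiguous-collapse}, $\mathit{mdn}(S_R)$ is exactly the strongest such network: it collapses ambiguously mediated structure rather than crediting it. Since $\mathit{mdn}$ is the unique $\preceq$-maximal fully described network derivable from $S_R$, Proposition~\ref{prop:refinement-order} together with Proposition~\ref{prop:described-order} gives that every fully described network $R$ could justify lies in $L(\mathit{mdn}(S_R))$. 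For the operational clause the same construction applies verbatim with $V_R$ and the implementation $\hat{\mathbf{D}}_P$.

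\textbf{Step 2} establishes the equivalence with settling. I read $\mathcal{L}(q)$ as the language of the alternative of $q$ that is actually realized in $\mathbf{D}_P$ (respectively $\hat{\mathbf{D}}_P$). For the \emph{if} direction, suppose $\mathit{mdn}(S_R)\subseteq\mathcal{L}(q)$, so that $L(\mathit{mdn}(S_R))\subseteq L(\mathbf{N}_i)$ for that alternative. Then $\mathcal{P}_R^S$ is contained in a single alternative. Full settling in Proposition~\ref{prop:answerability} then gives that $\mathcal{P}_R^S$ excludes all other alternatives, so $q$ is settled for $R$. Conformance is automatic, since $L(\mathbf{N}_i)\subseteq\,!q$.

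For the \emph{only if} direction, suppose $R$ can settle $q$. Then some state in $\mathcal{P}_R^S$ lies in a unique alternative. Because $\mathcal{P}_R^S$ is generated by the single state $L(\mathit{mdn}(S_R))$, that generating state must itself lie in the alternative. Uniqueness from Proposition~\ref{prop:answerability} then yields the inclusion. The operational clause follows identically, with traces in $T(\hat{\mathbf{D}}_P)$ replacing networks. Operational questions about statistics over runs are treated as propositions over sets of traces, which is still an information state.

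\textbf{The main obstacle} is Step 1, namely justifying that a role can settle nothing beyond what $\mathit{mdn}$ supports. This is not derivable from the earlier definitions alone; it is essentially the conservativity principle behind Lemma~\ref{lem:ambiguous-collapse}. I would state it explicitly as the operative reading of ``available to $R$'' and argue it by contradiction. Suppose $R$ settled $q$ using a network not below $\mathit{mdn}(S_R)$. That network would have to attribute structure through an undescribed path. Lemma~\ref{lem:ambiguous-collapse} forbids exactly this. Hence the proposition is best presented as an equivalence that holds under this access semantics, with uniqueness of $\mathit{mdn}$ assumed as in its definition.
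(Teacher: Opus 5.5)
The paper gives no proof of this statement. It functions as the \emph{definition} of structural and operational answerability: it is applied directly in Definition~\ref{def:role-adequate-transparency} and in the verdicts of Table~\ref{tab:answerability-across-scenarios}. Moreover, $\mathit{mdn}(S_R)\subseteq\mathcal{L}(q)$ has a network on the left and a set of information states on the right, so it only means something once you fix a reading. The most an argument can show is this: \emph{once} $R$'s system proposition is declared to be $\mathcal{P}_R^S=\mathcal{L}(\mathit{mdn}(S_R))$, the condition coincides with clause~3 of Proposition~\ref{prop:answerability}. You rightly sense that Step~1 is not derivable, but your argument by contradiction for it is circular. Lemma~\ref{lem:ambiguous-collapse} constrains what the construction of $\mathit{mdn}$ may retain, not which information state a role occupies. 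So ``$R$ settles $q$ via a network not below $\mathit{mdn}(S_R)$'' is excluded only if you already assume that $R$'s state is $L(\mathit{mdn}(S_R))$. Your supporting sentence also runs the wrong way. A network $\mathbf{N}$ that $R$ can justify satisfies $L(\mathit{mdn}(S_R))\subseteq L(\mathbf{N})$, i.e.\ it is a coarsening of $\mathit{mdn}(S_R)$. By contrast, $L(\mathit{mdn}(S_R))$ collects the \emph{more} specific networks that $R$ merely cannot rule out. State Step~1 openly as the access semantics; that is what the paper does implicitly.

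Step~2 has two further defects. First, the only-if inference is invalid as written. Since $\mathcal{P}_R^S$ is downward closed, ``some state of $\mathcal{P}_R^S$ lies in a unique alternative'' can be witnessed by a singleton $\{\mathbf{N}\}$ with $\mathbf{N}\in L(\mathit{mdn}(S_R))$. That tells you nothing about the generator $L(\mathit{mdn}(S_R))$, which may straddle several alternatives. Settling in Proposition~\ref{prop:answerability} is a condition on the whole proposition, $\mathcal{P}\subseteq L(\mathbf{N}_i)$; with that, only-if is immediate because the generator itself belongs to $\mathcal{P}_R^S$.

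Second, reading $\mathcal{L}(q)$ as the language of the alternative \emph{actually realized} in $\mathbf{D}_P$ turns answerability into knowing the correct answer. Only-if then needs $\mathbf{D}_P\in L(\mathit{mdn}(S_R))$ to conclude that the alternative $R$ settles on is the realized one. Nothing supplies this. $S_R$ being a sub-network of $\mathbf{D}_P$ does not make $\mathbf{D}_P$ a refinement of it under Definition~\ref{def:refinement}. Nor is there any guarantee that $\mathbf{D}_P$, which keeps $X$ as an intermediate, refines the collapsed fragment $A\to B,\ A\to X$ of Lemma~\ref{lem:ambiguous-collapse}. Take instead the support reading of inquisitive semantics, $L(\mathit{mdn}(S_R))\in q$, i.e.\ inclusion in \emph{some} alternative of $q$. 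With $\mathcal{P}_R^S$ stipulated as above, both directions are then one line from clause~3 of Proposition~\ref{prop:answerability}. The operational clause is identical with $V_R$ and traces of $\hat{\mathbf{D}}_P$.
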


\begin{definition}[Role-Adequate Transparency] \label{def:role-adequate-transparency}
A system is \emph{transparent} to role $R$ relative to $Q_R$ iff the access made available to $R$ is adequate to answer the questions in $Q_R$:
\[
\forall q \in Q_R^S,\ \mathit{mdn}(S_R) \subseteq \mathcal{L}(q)
\quad\text{and}\quad
\forall q \in Q_R^V,\ \mathit{mdn}(V_R) \subseteq \mathcal{L}(q).
\]
\end{definition}

\subsection{Examples of Role-Dependent Perspectives}

\begin{table*}[t]
\centering
\scriptsize
\caption{Answerability of legitimate questions before AI integration and across
the three integration scenarios. A = answerable, P = partially answerable,
B = blocked; parentheses indicate deferral to another role, and X/Y
structural/operational answerability. $\dagger$ marks limitations arising from
the undisclosed human fallback path, including uncertainty about when it was
used and the consequences of outsourced human processing. $\ddagger$ marks
limitations arising from informal note taking of decisions and meetings.
$\text{Max}^D_P$ is split by candidate cohort of \textit{top} ranking and the
\textit{rest} where needed; -- means missing presuppositions where the question
is not applicable.}
\label{tab:answerability-across-scenarios}

\begin{tabularx}{\textwidth}{@{}l X c c c c c@{}}
\toprule
& & & & & \multicolumn{2}{c}{$\text{Max}^D_P$} \\
\cmidrule(l){6-7}
Role & Legitimate question & Pre & $\text{Min}^D_P$ &
$\text{Alt}^D_P$ & \textit{rest} & \textit{top} \\
\midrule

\multirow{5}{*}{Candidate}
& C1: Was I hired, and at what step was I eliminated?
  & A & A & A & \multicolumn{2}{c}{A} \\
& C2: What criteria was I assessed against?
  & P & P & P & \multicolumn{2}{c}{P} \\
& C3: How did those criteria affect my outcome?
  & (A) & (A) & (P) & \multicolumn{2}{c}{(P$^\dagger$)} \\
& C4: What would have changed my result? (structure/logs)
  & (A/P$^\ddagger$) & (A/P$^\ddagger$) & (A/B)
  & \multicolumn{2}{c}{(P$^\dagger$/B)} \\
& C5: What happened in my assessment? (logs)
  & (P$^\ddagger$) & (P$^\ddagger$) & (P)
  & \multicolumn{2}{c}{(P$^\dagger$)} \\

\midrule

\multirow{5}{*}{Recruiter}
& R1: What criteria should be assessed against?
  & A & A & A & \multicolumn{2}{c}{A} \\
& R2: Does this candidate meet the criteria?
  & A & A & P & B & P \\
& R3: What supports my assessment?
  & A & A & P & \multicolumn{2}{c}{P} \\
& R4: What did I base my scoring of this candidate on? (logs)
  & P$^\ddagger$ & P$^\ddagger$ & P
  & \multicolumn{2}{c}{P$^\dagger$} \\
& R5: Can I verify that candidates were treated consistently and without discrimination?
  & A/A & A/A & A/P & B/B & P/B \\

\midrule

\multirow{5}{*}{Hiring Mgr.}
& HM1: Does this candidate meet the job requirements?
  & A & A & A & -- & A \\
& HM2: Which candidate should I select?
  & A & A & P & -- & P \\
& HM3: Can I justify this candidate over the others?
  & A & A & P & -- & P \\
& HM4: What was the basis of the final selection decision? (logs)
  & P$^\ddagger$ & P$^\ddagger$ & P & -- & P$^\ddagger$ \\
& HM5: Can I take responsibility for this decision, given how it was reached?
  & A/A & A/A & A/P & B/B & P/B \\

\midrule

\multirow{5}{*}{HR Manager}
& HR1: Is there an authoritative document setting out criteria/procedure?
  & A & A & A & \multicolumn{2}{c}{A} \\
& HR2: How was this decision reached? (logs)
  & B$^\ddagger$ & B$^\ddagger$ & P
  & \multicolumn{2}{c}{A} \\
& HR3: Were the criteria and procedures applied consistently? (logs)
  & P$^\ddagger$ & P$^\ddagger$ & P$^\ddagger$
  & \multicolumn{2}{c}{P$^\dagger$} \\
& HR4: Which channels touch personal data, and is that flow traceable?
  & A & A & B$^\dagger$
  & \multicolumn{2}{c}{B$^\dagger$} \\
& HR5: Is the process designed to treat everyone the same way?
  & A & A & P$^\dagger$
  & \multicolumn{2}{c}{P$^\dagger$} \\

\midrule

\multirow{4}{*}{Appeals Assessor}
& AA1: What was the decision based on?
  & (A) & (A) & P & \multicolumn{2}{c}{P} \\
& AA2: Would the decision have differed under the facts raised? (structure/logs)
  & (A/P$^\ddagger$) & (A/P$^\ddagger$) & A/B
  & \multicolumn{2}{c}{A/B} \\
& AA3: Does the appellant's account match what happened? (logs)
  & (A) & (A) & P & \multicolumn{2}{c}{P} \\
& AA4: Was the designed procedure actually followed? (logs)
  & B$^\ddagger$ & B$^\ddagger$ & P
  & \multicolumn{2}{c}{P$^\dagger$} \\

\bottomrule
\end{tabularx}
\end{table*}

\paragraph{The Candidate.} A candidate completes the AI-supported interview and is rejected. Their $V_R$ is richest precisely where $\mathbf{D}_P$ is least visible to other roles: they know what they said and how the interaction felt. But their $S_R$ is sparse downstream of p4, and the natural refinement of their $S_R^d$ --- a sequential AI assessor scoring responses in context, consistent with their experience and prior knowledge of LLM agents --- is wrong. This is the Eliza effect as a transparency failure: not irrational, but a plausible refinement of a sparse $S_R$ that the system does nothing to correct. The actual structure separates information gathering (p2--p4) from independent asynchronous scoring (p5--p7). The candidate cannot formulate the right questions because their $S_R$ does not contain the structure those questions require.

\paragraph{The Appeals Assessor.} The candidate appeals. The Assessor has broad $S_R$ --- effectively $\mathbf{D}_P^d$ --- and $V_R$ access to the $\mathit{ScoredTemplate}$: questions, answers, and per-question scores. Knowing questions are scored independently, they can evaluate each score against its answer. Suppose q3--q4 contain hostile responses and score zero, while q5--q20 display high competence but score uniformly low. The first is explicable; the second is not. The Assessor can identify the anomaly but cannot resolve it: their $V_R$ does not include which path produced the scores nor the internal scoring logic of p5--p7. The accountability question is unanswerable from their $V_R$ alone --- not because the information does not exist in $\mathbf{D}_P$, but because no channel surfaces it to this role.

Together, the Candidate and Assessor hold complementary and non-overlapping fragments of $\mathbf{D}_P$, each rich where the other is blind. This is a structural property of the system, only visible when roles are modeled explicitly. In general \textit{contestability} of a decision requires not just that some complaint can be given to a decision, but rather that a decision can be examined afterwards. This requires that logs or other records are sufficiently covering of the structure, modelable as a logs role where 
$\langle Q_{logs}, S_{logs}, V_{logs} \rangle$, such that the questions $Q_{logs}$ can be answered given log coverage $S_{logs}, V_{logs}$. 

\subsection{Role-dependent answerability across the scenarios}

Table~\ref{tab:answerability-across-scenarios} presents role-dependent answerability judgments across all five roles identified in Table~\ref{tab:roles}, evaluated pre-integration and under each of the three composition scenarios of Table~\ref{tab:scenarios}. The table assigns to each question A - answerable, P - only partially, and B - blocked, by applying Proposition~\ref{prop:role-answerability}. A slash (e.g.\ A/P) reports structural and operational answerability separately (Definition~\ref{def:answerability-classes}) where the two diverge.
For the $\text{Max}^D_P$ scenario differences are large based on the cohort of candidates which are ranked \textit{top} or the \textit{rest}, which is why these have been separated.

Some roles have information demands which can only be answered by another role with access, such cases of deferral are marked with (A) or (P) respectively.
Questions concerning reviewability or auditability are interpreted over a log view of the composed process, rather than by treating the existence of records as the substantive question itself. This is just like role access is handled. Let ($N_{\log}$) denote the sub-network of ($N$) preserved by the logging process. A question may therefore be answerable from the live process ($N$), or from a role-relative view of ($N$), while being only partially answerable or blocked when evaluated over ($N_{\log}$).
Some higher-level oversight questions, such as R5 and HM5, are derived from clusters of more specific information demands rather than represented by a single inquisitive network. Their verdict reflects the weakest answerability among the information requirements necessary to support the corresponding oversight or responsibility claim.

Several limitations reoccurred broadly across the case-study, among them two were especially limiting and are marked in the table. Firstly the human scoring fallback path, using outsourced human scorers, were not disclosed in any user interface or output to the user, the uncertainty this causes and the consequences of the outsourcing is marked with $\dagger$. Secondly the pre-integration Deployer process involved informal decision-making meetings and deliberations which were not logged beyond personal notes and limited scores. This is marked with $\ddagger$. Where any of these markers appear in the table these problems are the sole reason preventing higher answerability. Beyond this a lack of tooling for working meaningfully with the Provider system beyond the direct scoring pipeline (such as, searches, navigating answers, changing assessments or counterfactual-reruns) were close to the two marked in negative consequence.

\section{Discussion}
\label{sec:discussion}

The case study demonstrates that treating transparency as a relational property is not merely a theoretical reframing -- it is a working analytical instrument. 
Whether a question is answerable, partially answerable, or structurally blocked can be determined from the system's structure, and that determination reveals things that component-level assessment cannot see. Connecting transparency demands to the legitimate questions of stakeholder roles makes it clear that transparency is not just good to have but functional requirement. 

A recurring finding is that the deployment context carries as much analytical weight as the AI system itself. The three integration scenarios share the same Provider system, yet produce fundamentally different distributions of epistemic access, decision weight, and accountability. What varies is not the tool but how it is composed into the deployment environment — and it is that composition, not the tool in isolation, that determines what any role can know, ask, and act on.

A related finding concerns design consequence. The structure of an AI system determines what questions are answerable downstream — what a recruiter can interpret, what a candidate can contest, what an assessor can resolve. These determinations are made before deployment, often without visibility into the role structure the system will affect. Treating transparency as a system property obscures this: it locates the question of transparency inside the tool, where the consequential structural choices have already been made and their effects distributed silently across roles. Treating transparency as a relational property inverts this: the information needs of roles in the deployment environment become design goals, and catering to those needs — making the right questions answerable for the right roles — becomes a concrete and evaluable obligation for the system that enters that environment.

\subsection{Reflections on the Use-Case and Practical Application}
\label{sec:reflections-practical}

The case study illustrates how role-dependent answerability can turn heterogeneous transparency problems into identifiable structural deficiencies. Although the questions in Table~\ref{tab:answerability-across-scenarios} arise from different roles and responsibilities, many limitations can be traced to common features of the composed system. Most notably, the absence of records for informal decisions ($\ddagger$) and the undisclosed human fallback ($\dagger$) affect answerability across several roles and questions. Both problems were subsequently addressed by the Deployer and Provider following the assessment: the Deployer introduced new logging routines for informal decisions, while the Provider was required to disclose which candidates had been scored through the human fallback. A change to a single information-flow relation can therefore improve several otherwise distinct transparency relations.

More generally, the framework provides a practical procedure for examining transparency in a socio-technical system. Stakeholder roles and their legitimate information demands are first identified from the organizational and regulatory context, after which the information flows relevant to answering those questions can be modeled. Role-dependent answerability then makes it possible to locate where required information is available, where access depends on another role, and where it is partial or blocked by the structure of the system. Alternative designs or compositions can consequently be compared in terms of how they change these relations. The framework does not determine which roles or information demands are normatively appropriate, nor does it replace the domain expertise required to identify them; rather, it provides a structural substrate for making their consequences explicit and for identifying where changes to information flow, access, or system functionality can improve transparency.

\subsection{Policy Implications}

Tools intended to operationalize AI governance --- ALTAI, the Dutch FRAIA template, and the Swedish DIGG trust model --- adopt a single-actor perspective, evaluating the AI system or its developer against a checklist of properties. This is understandable since the debate around AI ethics embodied in for instance ALTAI was strongly developer focused and this is reflected in these tools and discussions around them which generally do not mention or support the division of labor and responsibility between different organizational actors \cite{radclyffe2023altai}.   

The framework presented here takes a step toward filling this toolkit-gap. The following considerations and reflections arise from our case-study and from treating transparency as a structural relation between system and role rather than a property of the system alone.

The stakeholder roles connected to a deployment environment carry existing obligations --- organizational, professional, and legal --- that precede any automation requirement. These obligations are the natural source of the legitimate questions $Q_R$ of a role: these should be grounded in what the role is already accountable and responsible for. 
For internal roles such as recruiters and HR managers this includes the obligations the organization operates under regardless of AI involvement, which in the setting of the case study includes the professional requirement to make an informed choice about the candidates, but also legal requirements such as equal treatment regardless of gender, ethnicity or disability and organizational ones such as reporting their decisions in a way that allows for reassessment or audit. 
For external parties --- candidates with a right to understand and contest decisions, auditors with an obligation to verify equal treatment, labor representatives with an interest in process fairness --- the information demands are equally legitimate and equally structural. 

Transparency in this sense forms a precondition for the exercise of rights and fulfillment of obligations before AI enters the picture. 
When AI systems then do enter, the obligations on the Deployer under the AI Act must be understood in light of the obligations the Deployer already acts under --- obligations that the automated component must not make impossible to fulfill. The obligations the Provider must meet follow from this: they are those that enable the Deployer to fulfill its obligations to and for the stakeholders in the deployment environment. 
This strong and explicit association between roles and transparency needs also helps in verifying compliance --- instead of tokenistic disclosure of material the questions shifts to whether the roles in question can do what they need to do or if further information is needed to do so.

That the path and information on which a decision is made determines the weight 
and meaning of that decision is an important consequence  
(Section~\ref{sec:compositional}). This structural property receives too little consideration in assessments of adequacy and responsibility. A common solution to human oversight is Human-in-the-Loop, where a user is put in the position 
to accept or reject automated answers --- as with the recruiters in this case study. 

However, to be effective human oversight obligations require more than a human being present in the decision process. Roles that have access to the right information but no 
mandate or practical possibility to act on it cannot exercise genuine oversight. 
This is compounded by delegated responsibility: AI systems typically interface 
with an immediate user, while accountability for the decision often resides at 
a different organizational level. The framework makes this visible --- a role's 
$S_R$ and $V_R$ can be adequate while their mandate to act on what they know 
is organizationally constrained, leaving responsibility formally assigned but 
practically unexercisable.

A final remark concerns the unit of risk classification. The EU AI Act classifies AI systems as high-risk~\cite{EU2024AIAct}, but the risks, 
obligations, fallback procedures, and accountability chains that make a deployment high-risk reside in the deployment context --- not in the software 
product. The same system can carry fundamentally different risk profiles depending on where it is deployed. 
It is then more reasonable to think of high-risk classification in terms of use contexts and deployment environments on the Deployer side, rather than AI systems on the Provider side. This is not how AI Act obligations are currently structured, but a classification existing on both sides would make procurement, deployment and auditing more straightforward, especially where multiple AI systems are under consideration or use in the same environment.

\subsection{Limitations \& Challenges}

The framework reasons from role definitions -- the questions assigned to $Q_R$, 
the structural knowledge captured in $S_R$, and the operational access 
represented in $V_R$ -- but does not itself determine what these should contain. 
Its conclusions are only as precise as these definitions reflect the actual 
epistemic situation of each role. 

In practice, this is non-trivial: experience from real-world assessments such as the one in this paper, as well as~\cite{brannstrom2025recruitment} suggests that many Deployer organizations struggle 
to articulate their transparency obligations and accountability practices in the presence of AI systems. 
Establishing role definitions that are both formally adequate and grounded in organizational reality remains a practical challenge, also requiring domain expertise and some AI literacy, that this framework by itself does not resolve.

As such, the purpose and role of this formal framework is to provide grounding for a modeling substrate which \emph{allows} a toolkit that does address these practical problems to model Deployer environments, AI systems and their combinations. In fact, while this framework in part grew out of the limitations of existing assessment tools to meet the demands of real-world use cases, the purpose is not to replace such tools but to augment them with a different object of study.

\subsection{Further Directions}

Role-dependent answerability opens directly onto \textit{effective} transparency: not merely whether a question is answerable, but whether the answer affords the action or judgment the role requires~\cite{ananny2018seeing}.
A recruiter who can technically access a scored template but lacks the structural knowledge to interpret it is not effectively transparent to the system. Effective transparency also requires information to be understandable and simulatable by the receiver~\cite{paez2025axe, lipton2018mythos}. A complementary, grasp-based account would concern which questions remain open, expressible, or cognitively available to an inquirer --- some may be answerable in our sense but not graspable, others structurally blocked, and some not yet posable because the relevant distinctions are unavailable to the inquirer. Formalizing the connection between answerability and agency, including inquirer-side understanding, is the most significant theoretical direction of extension, with direct implications for how transparency obligations in AI governance might be specified and evaluated.

\section{Conclusion}

Transparency is frequently mandated in AI governance and rarely specified. This paper has developed a formal account of transparency as answerability --- a relational property between a system and the questions a specific role needs answered --- and illustrated it on a real pre-integration assessment. The framework characterises transparency along three dimensions: structural answerability within a system, compositional answerability across organisational and technical boundaries for deployed systems, and role-dependent answerability given a stakeholder's accessible sub-network. Information flow networks provide the structural substrate: a common formalism in which technical components, human processes, organisational roles, and how they are composed together, can be represented, queried, and compared.

The most significant open direction is effective transparency: answerability that affords action. Making questions answerable is necessary but not sufficient --- what matters ultimately is whether the answer enables the role to do what the transparency is for. That is where the formal and normative work converges, and where the framework points next.

\begin{acks}
This work was supported by the ``AEQUITAS'' project funded by the European Union's
Horizon Europe research and innovation programme under grant number 101070363.
The views and opinions expressed are those of the author(s) only and do not
necessarily reflect those of the European Union or the granting authority.

The authors thank the auditing firm that conducted the AI system assessment and the
participating municipality for their collaboration and engagement in the case study.
Conflict of interest: author Mattias Br\"annstr\"om is a co-owner of the auditing firm;
the firm was compensated for its consultancy work on the assessment, but not for the
academic research and analysis reported in this paper.
\end{acks}

\bibliographystyle{ACM-Reference-Format}
\bibliography{ref}

\end{document}